\newcommand{\notinsubfile}[1]{}
\newcommand{\T}{\widetilde{T}}
\newcommand{\N}{\widetilde{n}}
\newcommand{\U}{\widetilde{u}}
\newcommand{\V}{\widetilde{v}}
\newcommand{\A}{\widetilde{a}}
\newcommand{\R}{\mathbb{R}}
\newcommand{\norm}[1]{\left\lVert #1\right\rVert}
\DeclareMathOperator{\argmin}{arg\,min}
\newtheorem{theorem}{Theorem}
\newtheorem{lemma}{Lemma}
\newtheorem{corollary}{Corollary}
\newtheorem{proposition}{Proposition}
\theoremstyle{definition}
\newtheorem{assumption}{Assumption}
\theoremstyle{definition}
\newtheorem{definition}{Definition}
\theoremstyle{remark}
\newtheorem{remark}{Remark}
\title{\LARGE \bf
Barrier Function Overrides
For Non-Convex
Fixed Wing Flight Control
and Self-Driving Cars
}
\author{
Eric Squires \IEEEmembership{Member, IEEE},
Phillip Odom,
Zsolt Kira
\thanks{
Eric Squires and Phillip Odom are with the Georgia Tech Research Institute.
Zsolt Kira is with the College of Computing at the Georgia Institute of Technology.
Corresponding Author: Eric Squires. 250 14th St, Atlanta, GA, 30318.
(email: eric.squires@gtri.gatech.edu; phillip.odom@gtri.gatech.edu; zkira@gatech.edu)
}
}
\begin{document}

\maketitle
\thispagestyle{empty}
\pagestyle{empty}

\begin{abstract}

Reinforcement Learning (RL) has enabled vast performance improvements for
robotics systems. To achieve these results though, the agent often must
randomly explore the environment, which for safety critical systems presents a
significant challenge. Barrier functions can solve this challenge by enabling
an override that approximates the RL control input as closely as possible without
violating a safety constraint. Unfortunately, this override can be
computationally intractable in cases where the dynamics are not convex in the
control input or when time is discrete, as is often the case when training RL
systems. We therefore consider these cases, developing novel barrier
functions for two non-convex systems
(fixed wing aircraft and self-driving cars performing lane merging with adaptive cruise control)
in discrete time.
Although solving for an online and optimal override is in general intractable
when the dynamics are nonconvex in the control input,
we investigate approximate solutions, finding that these approximations enable
performance commensurate with baseline RL methods with zero safety violations.
In particular, even without attempting to solve for the optimal override at all,
performance is still competitive with baseline RL performance. We discuss the tradeoffs
of the approximate override solutions including performance and computational tractability.
\end{abstract}

\section{Introduction}

\IEEEPARstart{R}{einforcement} learning (RL) presents significant deployment
challenges for safety critical systems. While the performance can often vastly
exceed other approaches, the propensity to unpredictably fail can render its
employment impractical. On the other hand, barrier functions
\cite{ames2016control,ames2019control}
excel at enabling
safety assurance, but require simplifying assumptions on the system dynamics to
compute control inputs that enforce the safety constraint,
making it difficult to apply to problems where RL excels.
Two examples of this type of problem are waypoint following for fixed wing aircraft
and lane merging with adaptive cruise control
for self-driving cars, both of which
can be non-convex in the control input
and therefore make computing a safe control computationally intractable.
Nevertheless, after deriving novel barrier functions for these systems
we investigate computationally tractable approximations to a barrier function override,
finding that this enables both safety assurance and high performance even
relative to model free safe RL baselines.

A barrier function
is
an output function of the system state that enables safety assurance.
When used to ensure safety with RL, the RL policy first outputs
a nominal control value. Given this control value,
a safe control value is computed through an optimization
that is
as close as possible to the nominal control value without violating 
the barrier constraint.
As shown in \cite{ames2016control}, under the assumption that the system
is linear in the control input, this optimization
can be computed efficiently for online overrides.

\begin{figure}
\centering
\def\svgwidth{0.95\columnwidth}
\begingroup%
  \makeatletter%
  \providecommand\color[2][]{%
    \errmessage{(Inkscape) Color is used for the text in Inkscape, but the package 'color.sty' is not loaded}%
    \renewcommand\color[2][]{}%
  }%
  \providecommand\transparent[1]{%
    \errmessage{(Inkscape) Transparency is used (non-zero) for the text in Inkscape, but the package 'transparent.sty' is not loaded}%
    \renewcommand\transparent[1]{}%
  }%
  \providecommand\rotatebox[2]{#2}%
  \newcommand*\fsize{\dimexpr\f@size pt\relax}%
  \newcommand*\lineheight[1]{\fontsize{\fsize}{#1\fsize}\selectfont}%
  \ifx\svgwidth\undefined%
    \setlength{\unitlength}{221.71216965bp}%
    \ifx\svgscale\undefined%
      \relax%
    \else%
      \setlength{\unitlength}{\unitlength * \real{\svgscale}}%
    \fi%
  \else%
    \setlength{\unitlength}{\svgwidth}%
  \fi%
  \global\let\svgwidth\undefined%
  \global\let\svgscale\undefined%
  \makeatother%
  \begin{picture}(1,0.61421332)%
    \lineheight{1}%
    \setlength\tabcolsep{0pt}%
    \put(0,0){\includegraphics[width=\unitlength,page=1]{./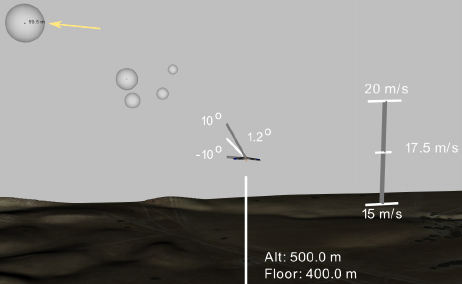}}%
    \put(0.44741809,0.5409323){\makebox(0,0)[t]{\lineheight{1.25}\smash{\begin{tabular}[t]{c}Goal: reach waypoints\end{tabular}}}}%
    \put(0,0){\includegraphics[width=\unitlength,page=2]{./imgs/svg/pdf/fw_env.pdf}}%
    \put(0.08764086,0.32545356){\makebox(0,0)[t]{\lineheight{1.25}\smash{\begin{tabular}[t]{c}Pitch\\Limits\end{tabular}}}}%
    \put(0,0){\includegraphics[width=\unitlength,page=3]{./imgs/svg/pdf/fw_env.pdf}}%
    \put(0.71859528,0.2949912){\makebox(0,0)[t]{\lineheight{0.94999999}\smash{\begin{tabular}[t]{c}Speed\\Limits\end{tabular}}}}%
    \put(0,0){\includegraphics[width=\unitlength,page=4]{./imgs/svg/pdf/fw_env.pdf}}%
    \put(0.21349509,0.19092648){\makebox(0,0)[t]{\lineheight{0.94999999}\smash{\begin{tabular}[t]{c}Altitude Limit\end{tabular}}}}%
    \put(0,0){\includegraphics[width=\unitlength,page=5]{./imgs/svg/pdf/fw_env.pdf}}%
  \end{picture}%
\endgroup%

\caption{
Screenshot of the UAV environment with safety conditions
visualized with SCRIMMAGE \cite{demarco2018}.
}
\label{fig_fw_screenshot}
\end{figure}

\begin{figure}
\centering
\def\svgwidth{0.95\columnwidth}
\begingroup%
  \makeatletter%
  \providecommand\color[2][]{%
    \errmessage{(Inkscape) Color is used for the text in Inkscape, but the package 'color.sty' is not loaded}%
    \renewcommand\color[2][]{}%
  }%
  \providecommand\transparent[1]{%
    \errmessage{(Inkscape) Transparency is used (non-zero) for the text in Inkscape, but the package 'transparent.sty' is not loaded}%
    \renewcommand\transparent[1]{}%
  }%
  \providecommand\rotatebox[2]{#2}%
  \newcommand*\fsize{\dimexpr\f@size pt\relax}%
  \newcommand*\lineheight[1]{\fontsize{\fsize}{#1\fsize}\selectfont}%
  \ifx\svgwidth\undefined%
    \setlength{\unitlength}{201.43387985bp}%
    \ifx\svgscale\undefined%
      \relax%
    \else%
      \setlength{\unitlength}{\unitlength * \real{\svgscale}}%
    \fi%
  \else%
    \setlength{\unitlength}{\svgwidth}%
  \fi%
  \global\let\svgwidth\undefined%
  \global\let\svgscale\undefined%
  \makeatother%
  \begin{picture}(1,0.5500469)%
    \lineheight{1}%
    \setlength\tabcolsep{0pt}%
    \put(0,0){\includegraphics[width=\unitlength,page=1]{./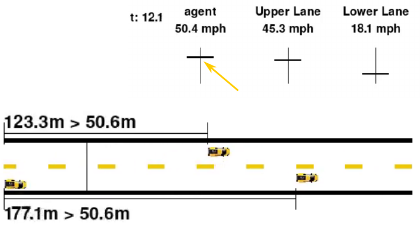}}%
    \put(0.7293396,0.29770803){\makebox(0,0)[t]{\lineheight{1.25}\smash{\begin{tabular}[t]{c}Goal: cruise control\end{tabular}}}}%
    \put(0,0){\includegraphics[width=\unitlength,page=2]{./imgs/svg/pdf/car_env.pdf}}%
    \put(0.18199136,0.45173435){\makebox(0,0)[t]{\lineheight{1.25}\smash{\begin{tabular}[t]{c}No speeding\end{tabular}}}}%
    \put(0.62413725,0.00934244){\makebox(0,0)[t]{\lineheight{1.25}\smash{\begin{tabular}[t]{c}Safe distance\end{tabular}}}}%
    \put(0,0){\includegraphics[width=\unitlength,page=3]{./imgs/svg/pdf/car_env.pdf}}%
    \put(0.17734045,0.34444005){\makebox(0,0)[t]{\lineheight{1.25}\smash{\begin{tabular}[t]{c}Lane keeping\end{tabular}}}}%
  \end{picture}%
\endgroup%

\caption{
Screenshot of the car environment with safety conditions.
}
\label{fig_car_screenshot}
\end{figure}

However, restricting allowable dynamics
to those that are linear in the control input may limit the fidelity of system.
For instance, air drag is nonlinear in the load factor for aircraft.
Similarly,
due to transformations between reference frames, trigonometric
functions give rise to systems that are non-convex in the control input.
A more thorough list of
such systems can be found in \cite{binazadeh2017robust,narang2012analysis}.
At the same time, RL does not make this assumption,
meaning that finding a way for barrier functions
to be successfuly employed in systems that are nonlinear in the control input
can broaden their applicability for difficult, safety critical problems.

Aside from being applicable to dynamics that are nonconvex in the control input,
RL introduces an additional challenge because of its discrete time
formulation.
As shown in \cite{agrawal2017discrete}, when time is discrete,
the optimization to compute a safe control
input can become non-convex,
which cannot in general be solved in real time. This can occur even when the dynamics
are linear.
One solution is to accept that there is a gap between the theory
and implementation by deriving barrier function properties in continuous
time and then deploying them in discrete time (e.g. \cite{rabiee2023safe,cheng2019end}).
Unfortunately,
this requires the discrete timestep to be small so that the continuous time
dynamics
and barrier function derivative 
are reasonable approximations of the discrete step updates.
As an example from \cite{squires2021safety},
the simulation had to run at 200 Hz to avoid collisions.
Short timesteps make it more difficult
for RL algorithms to learn \cite{pateria2021hierarchical}
so this can significantly increase training time or impede final performance.
Another approach \cite{zhang2022control}
accounts for the discretization errors by including
an adjustment to the barrier constraint condition
that is the worst case error over the reachable set
within one timestep. However,
computing this reachable set for nonlinear
systems requires conservative approximations to make it computationally
efficient.

Given the non-convex optimization in discrete time barrier functions, a variety of solutions
have been proposed. In \cite{zeng2021safety}, the authors use model predictive control (MPC)
to solve an optimization including the barrier function constraint, a Lyapunov constraint,
and performance objectives. They show that by using more than the one step horizon
the performance can be improved. However, as it is computationally prohibitive
to solve a nonlinear MPC problem as the horizon grows, the optimization may be infeasible.
Multi-step horizons have been shown elsewhere to either improve performance
\cite{yang2023model}, expand the safe set \cite{gurriet2018online},
or construct
barrier functions \cite{squires2022composition}. Recursive feasibility for nonlinear
MPC with barrier functions is analyzed in
\cite{katriniok2023discrete} which introduces quasi-barrier functions
that can be used to expand the set of feasible states and ensure recursive feasibility.
They also apply the result to lane merging although the approach
requires a centralized computation involving the other vehicle.
Aside from using 
MPC to address the non-convex optimization,
another approach \cite{cheng2020safe} is to find
a lower bound to the barrier constraint where the lower bound is linear in the control input.
In particular, under the assumption of bounded uncertainty in a collision avoidance task
with a lower bound on the constraint,
the non-convex optimization \cite{cheng2020safe} transforms into a quadratic program.
However, this approach requires exploiting particular details of
the multi-agent system.

RL, on the other hand, can excel
in environments that are non-convex.
While RL can induce unsafe policies, there are a variety of
approaches to improve safety characteristics.
Nevertheless, while these approaches can yield policies
that are more sensitive to safety considerations, safety violations
can still occur.
In \cite{achiam2017constrained}, the authors show how to efficiently
evaluate surrogate cost and objective functions with bounded errors
for a proposed policy based on data from a different policy
and incorporate this insight into a trust region optimization.
Lagrangian approaches \cite{ma2021feasible} similarly
seek to solve a constrained optimization where reward is maximized
subject to cost limits. In this approach, a cost coefficient
is adaptively updated so that costs stay below an upper bound.
This approach was used with barrier functions for instance in \cite{yang2023model}
where a learned barrier function constraint is used instead of a discounted
cost estimate. Similarly, \cite{luo2021learning} learns a barrier function
that satisfies some of the required properties by design and trains
an override policy to be safe with respect to the learned barrier function.
To induce exploration, they add noise to this policy and if after resampling
there are no safe actions, revert to the certified safe policy.
Action masking is also investigated based on cost thresholds
in \cite{thananjeyan2021recovery} where the override policy
is trained to minimize the expected discounted cost.
Similarly, \cite{srinivasan2020learning} also includes a Lagrangian
term to induce the nominal policy to include cost considerations.
In \cite{wagener2021safe}, the authors note that applying overrides that
are not significantly safer than the nominal policy can lead to worse
performance so they therefore require that the override provide an improvement over a threshold
in order to be used. Rather than explicitly computing a safe
override policy, \cite{zhang2022saferl} computes an online adjustment
to actions proposed by a nominal policy. In particular, given
a function estimating the discounted cost, they adjust the action
using gradient descent to find a safe override.
Unfortunately, none of these approaches can assure that the system
will stay safe throughout training or be safe 
at deployment after training is
completed.

This paper addresses how to enable safe control overrides
using barrier functions for systems that are non-convex
in the control input.
It makes the following contributions.
First, we derive barrier functions for two such systems,
namely fixed wing aircraft performing waypoint following
subject
to safe flight envelope constraints and self driving cars
doing lane merging with adaptive cruise control.
Second, we develop RL environments for both fixed wing aircraft
and self driving cars.
Third, we provide an experimental examination of the effect
of different safety override approximations on RL performance
as well as compare to safe RL baselines.
In particular, we find that all of the discussed overrides
yield performance commensurate with the highest performing baselines
without any safety violations. This holds even when the safety override
makes no attempt to match the nominal control value output.

This paper is organized as follows. In Section \ref{sec_background}, we 
provide background for barrier functions.
In Section
\ref{sec_examples_fw},
\ref{sec_examples_dbl_int},
and 
\ref{sec_examples_car},
we derive barrier functions for fixed-wing aircraft.
a double integrator,
and self-driving cars, repectively.
We introduce RL environments to compare a barrier function
override for fixed wing aircraft and self-driving cars
with simulated experimental results in Section \ref{sec_experiments}.
Section \ref{sec_conclusion} concludes.

\section{Background}

\label{sec_background}

Background for RL can be found in \cite{sutton2018reinforcement}.
Discrete time barrier functions were
analyzed in \cite{agrawal2017discrete}. Let
the system have dynamics
\begin{equation}
s_{k+1} = f(s_k, u_k)
\label{eq_dyn}
\end{equation}
where $s_k\in \R^{n_s}$ and $u_k\in U\subseteq \R^{n_u}$.
An output function $h:\R^{n_s}\to \mathbb{R}$
with superlevel set 
\begin{equation}
C_h = \{s_k\::\: h(s_k) \ge 0\}
\label{eq_safe_set}
\end{equation}
can be used to ensure the safety of a system.
Definition \ref{def_ecbf}
and Proposition \ref{def_ecbf}
are from \cite{squires2021model}
which rephrases Definition 4 and Proposition 4 of 
\cite{agrawal2017discrete} 
using terminology from \cite{ames2016control}. 
\begin{definition}
A map $h : \R^{n_s} \to \R$ is a Discrete-Time
Exponential Control Barrier Function (DT-ECBF) on a set
$D_h\subseteq \mathbb{R}^{n_u}$ where $C_h \subseteq D_h$
with $C_h$ defined in \eqref{eq_safe_set}
if for all $s_k\in D_h$ there is a $u_k \in\R^{n_u}$ and $\lambda\in (0,1]$ such that
\begin{equation}
c_h(s_k, u_k) \triangleq h(f(s_k, u_k)) - (1 - \lambda) h(s_k) \ge 0.
\label{eq_bf_constraint}
\end{equation}
\label{def_ecbf}
\end{definition}
The admissible
control space is 
\[K_h(s_k) = \{u_k\in U\::\: c_h(s_k,u_k) \ge 0\}.\]
\begin{proposition}
Given a set $C_h \subset \R^{n_s}$ defined in \eqref{eq_safe_set} for an
output function $h$, let $h$ be a DT-ECBF on $D_h$ and $u : \R^{n_s} \to U$
be such that $u(s_k) \in K_h(s_k)$ for all $s_k \in D_h$. Then if $s_0 \in C_h$
then $s_k\in C_h$ for all $k > 0$.
\label{prop_ecbf}
\end{proposition}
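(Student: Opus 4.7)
The plan is a straightforward induction on the time index $k$, using the defining inequality of the DT-ECBF at each step to propagate non-negativity of $h$ forward.

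For the base case, the hypothesis $s_0 \in C_h$ immediately gives $h(s_0) \ge 0$ from the definition of $C_h$ in \eqref{eq_safe_set}. For the inductive step, I assume $s_k \in C_h$ and want to show $s_{k+1} = f(s_k, u(s_k)) \in C_h$. The first thing to note is that since $C_h \subseteq D_h$ by assumption, $s_k \in C_h$ implies $s_k \in D_h$, so the hypothesis $u(s_k) \in K_h(s_k)$ applies and in particular $c_h(s_k, u(s_k)) \ge 0$ by the definition of $K_h$. Unfolding \eqref{eq_bf_constraint} yields
\[
h(s_{k+1}) = h(f(s_k, u(s_k))) \ge (1-\lambda)\, h(s_k).
\]
Since $\lambda \in (0,1]$ gives $1-\lambda \in [0,1)$, and since $h(s_k) \ge 0$ by the inductive hypothesis, the right side is non-negative, so $h(s_{k+1}) \ge 0$ and hence $s_{k+1} \in C_h$.

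There is no real obstacle here; the only subtle point is the containment $C_h \subseteq D_h$, which is what guarantees that the DT-ECBF inequality and the admissibility condition on $u$ are applicable at every state that the trajectory visits. Without that inclusion one could not inductively guarantee that $u(s_k)$ is defined as a safe control. Note also that the same $\lambda$ used in the base inductive step can be reused throughout because Definition \ref{def_ecbf} only requires existence of some fixed $\lambda \in (0,1]$ that works on all of $D_h$; one does not need to juggle a different $\lambda$ at each timestep.
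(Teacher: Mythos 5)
Your induction argument is correct and is exactly the standard proof of this invariance result; the paper itself omits the proof (deferring to the cited prior work), but the argument there is the same: use $C_h \subseteq D_h$ to apply the admissibility of $u(s_k)$, then conclude $h(s_{k+1}) \ge (1-\lambda)h(s_k) \ge 0$. Your remarks on why $C_h \subseteq D_h$ is needed and why the choice of $\lambda$ is immaterial (any $\lambda \in (0,1]$ gives $1-\lambda \ge 0$) are both accurate.
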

In \cite{ames2016control}, the authors introduce an optimization that solves for a safe control value
that minimizes the squared distance to
a nominal control value $\hat{u}_k$. If $h$
is non-convex, then the following can be non-convex
even when the dynamics \eqref{eq_dyn} are linear \cite{agrawal2017discrete}:
\begin{IEEEeqnarray}{rCCl}
u^*_k
&=&\argmin_{u_k\in\mathbb{R}^{n_u}} &\frac{1}{2}||u_k - \hat{u}_k||^2 \label{eq_prgm}\\
&&\text{s.t.} & c_h(s_k,u_k) \ge 0 \IEEEnonumber\\
&&& u_k\in U.\IEEEnonumber
\end{IEEEeqnarray}

In \cite{squires2022composition} the authors develop an approach
for generating a barrier function given a safety function
$\rho: \mathbb{R}^{n_s}\to \mathbb{R}$
which encodes a safety specification.
In
\cite{squires2021model} this approach is adapted to discrete
time systems. 
Given a safety function
$\rho$
and evasive maneuver
$\zeta:\mathbb{R}^{n_s}\to U$, a DT-ECBF can be constructed via
\begin{equation}
h(s_0) = \inf_{k\ge 0}\rho(\hat{s}_k)
\label{eq_bf}
\end{equation}
with $\hat{s}_0 = s_0$ and $\hat{s}_{k+1} = f(\hat{s}_k,\zeta(\hat{s}_k))$.

\section{A Barrier Function For Fixed Wing Aircraft}

\label{sec_examples_fw}

RL for waypoint following has been previously
discussed e.g., in \cite{tang2024trajectory}, although they did not
consider safety.
Composition of multiple constraints using barrier functions
has been developed for fixed wing aircraft
for collision avoidance \cite{squires2022composition}
as well as collision avoidance with a geofence constraint \cite{molnar2024collision}.
Here though we consider a model from 
\cite{boskovic2004adaptive} that is non-convex
in the control input.
The model considers
UAV
speed, pitch, and heading. Because we examine a waypoint following
problem in Section \ref{sec_experiments} we also include position dynamics.
The discrete time dynamics of the model in \cite{boskovic2004adaptive}
for discrete time
is then given by
\begin{IEEEeqnarray}{rCl}
\IEEEyesnumber
\label{eq_fw_dynamics}
\IEEEyessubnumber v_{k+1} &=& v_k + \delta g ([T_k - \mathcal{D}(v_k,n_k)]/W - \sin(\gamma_k)) \label{eq_fw_dynamics_v}\\
\IEEEyessubnumber \gamma_{k+1} &=& \gamma_k + \delta g(n_k\cos(\mu_k) - \cos(\gamma_k))/v_k \label{eq_fw_dynamics_gamma}\\
\IEEEyessubnumber \psi_{k+1} &=& \psi_k + \delta gn_k\sin(\mu_k) / (v_k\cos(\gamma_k)) \\
\IEEEyessubnumber x_{k+1} &=& x_{k} + \delta v_k\cos(\gamma_k) \cos(\psi_k) \\
\IEEEyessubnumber y_{k+1} &=& y_{k} + \delta v_k\cos(\gamma_k) \sin(\psi_k) \\
\IEEEyessubnumber z_{k+1} &=& z_{k} + \delta v_k\sin(\gamma_k)
\end{IEEEeqnarray}
where $\delta > 0$ is a timestep,
$g$ is gravity, $(v_k, \gamma_k,\psi_k)^T$ are speed,
pitch, and heading while $(x_{k},y_{k},z_{k})^T$
are positions.
The state is then $s_k = (v_k, \gamma_k, \psi_k, x_k, y_k, z_k)^T$
while the control input $u_k = (T_k,n_k,\mu_k)^T$
consists of the thrust, load factor, and bank angle
where load factor is the ratio of weight to lift force.
The control input $u_k$ is subject to actuator constraints
\begin{IEEEeqnarray*}{l}
U = \{u\in\mathbb{R}^{n_u}\:: 
T_k\in [0, T_{max}],
n_k\in [n_{min},n_{max}], \\
\qquad\qquad\qquad\qquad \mu_k\in [-\mu_{max}, \mu_{max}]\}.
\IEEEyesnumber
\label{eq_U}
\end{IEEEeqnarray*}
Finally, 
\begin{equation}
\mathcal{D}(v_k,n_k) = 0.5\mathcal{R} v_k^2AC_{D_0} + 2\mathcal{K}n_k^2W^2/(\mathcal{R} v_k^2 \mathcal{A})
\label{eq_drag}
\end{equation}
is the drag force where $\mathcal{R}$ is the air density,
$\mathcal{A}$ is the wing surface area, $C_{D0}$ is the parasitic
drag coefficient, $\mathcal{K}$ is the induced drag coefficient,
and $W$ is the weight of the vehicle.
This model is nonlinear in $n_k$
and non-convex in $\mu_k$.

The safety condition encompasses
stall conditions, excessive speed that can stress the aircraft platform,
and maintaining an altitude above a floor $z_{min}$.
Given $v_{min} > 0$, let
\begin{IEEEeqnarray}{l}
S =
\{s_k\in\mathbb{R}^{n_s}\::\:
\gamma_k\in[\gamma_{min},\gamma_{max}],
v_k\in[v_{min},v_{max}] \IEEEnonumber \\
\qquad\qquad\qquad\qquad z_{k} \ge z_{min}\}.
\label{eq_fw_safety_condition}
\end{IEEEeqnarray}

Let
\begin{IEEEeqnarray*}{l}
\alpha(s_k) = \min(\gamma_{max}\lambda v_k/\delta, gn_{max} - g), \label{eq_alpha}\IEEEyesnumber\\
\N(s_k) =
  \cos(\gamma_k)
  + \frac{1}{g}\min\left(\frac{\lambda v_k}{\delta}(\gamma_{max} - \gamma_k), \alpha(s_k)\right)  \label{eq_n_safe}\IEEEyesnumber\\
\T(s_k) = W\sin(\gamma_k) + \mathcal{D}(v_k,\widetilde{n}(s_k)) \label{eq_T_safe}\IEEEyesnumber\\
\widetilde{\mu}(s_k) = 0 \label{eq_mu_safe}\IEEEyesnumber\\
\U(s_k) = (\widetilde{T}(s_k), \widetilde{n}(s_k), \widetilde{\mu}(s_k))^T \label{eq_fw_u_safe}\IEEEyesnumber \\
\tau(s_k) = -\gamma_k/(\alpha(s_k)/v_k) \label{eq_tau_k}\IEEEyesnumber
\end{IEEEeqnarray*}
Denote $\alpha_k = \alpha(s_k)$,
$\T_k = \T(s_k)$, $\N_k = \N(s_k)$, $\widetilde{\mu}_k = \widetilde{\mu}(s_k)$,
$\U_k = \U(s_k)$, and $\tau_k = \tau(s_k)$.
For some intuition on $\alpha$ and $\tau$,
note that
if pitch is negative, i.e. $\gamma_k < 0$, then the load
factor $\N(s_k) = \cos(\gamma_k) + \alpha(s_k) / g$
in \eqref{eq_n_safe}.
Then
$\gamma_{k+1} - \gamma_k = \delta \alpha(s_k) / v_k$
in \eqref{eq_fw_dynamics_gamma}
so that 
$\alpha(s_k)/v_k$ can be interpreted as the change in pitch per unit time.
Noting that
$v_{k+1} = v_k$ in \eqref{eq_fw_dynamics_v}
given $\T$ in \eqref{eq_T_safe}
so $\alpha(s_k)$ is constant,
we can then interpret
$\tau$ as the time to reach zero pitch.

Let
\begin{IEEEeqnarray}{l}
\IEEEyesnumber
\label{eq_b}
\IEEEyessubnumber b_1(s_k) = v_{max} - v_k \label{eq_b1}\\
\IEEEyessubnumber b_2(s_k) = v_k - v_{min} \label{eq_b2}\\
\IEEEyessubnumber b_3(s_k) = \gamma_{max} - \gamma_k \label{eq_b3}\\
\IEEEyessubnumber b_4(s_k) = \gamma_k - \gamma_{min} \label{eq_b4}\\
\IEEEyessubnumber b_5(s_k) = \\
\qquad z_{k} + v_k\min(0,\gamma_k)(\max(\tau_k,0) + \delta) - z_{min}  \IEEEnonumber
\end{IEEEeqnarray}
We then define a candidate barrier function
\begin{equation}
h_{fw}(s_k) = \min_{i\in\{1,\ldots,5\}} b_i(s_k). \label{eq_bf_fw}
\end{equation}
The following can be computed with a straightfoward application of derivatives
to \eqref{eq_drag} and \eqref{eq_T_safe}:
\begin{IEEEeqnarray*}{l}
\T_{max} =
\max_{x\in\mathbb{R}^{n_s}\::\: v_k\in[v_{min},v_{max}],\gamma_k\in [\gamma_{min},\gamma_{max}]}\T(s_k)\IEEEyesnumber \label{eq_T_min}\\
\T_{min} =
\min_{x\in\mathbb{R}^{n_s}\::\: v_k\in[v_{min},v_{max}],\gamma_k\in [\gamma_{min},\gamma_{max}]}\T(s_k)\IEEEyesnumber \label{eq_T_max}
\end{IEEEeqnarray*}

For the candidate $h_{fw}$ to be useful for keeping the system
safe, we must have
$C_{h_{fw}}\subseteq S$ where $C_{h_{fw}}$ is defined in \eqref{eq_safe_set}
using $h_{fw}$. Hence we first show
conditions under which $b_i(s_k) \ge 0$ for $i=1,\ldots, 5$ implies $s_k\in S$. 
We then show when $\widetilde{u}_k\in U$ and
$b_i(f(s_k,\widetilde{u}_k) + (1 - \lambda)b_i(s_k) \ge 0$ for $i=1,\ldots,5$.
Finally, we give sufficient conditions under which a minimum
of functions
is a barrier function so that $h_{fw}$ in
\eqref{eq_bf_fw} is a barrier function.

\begin{lemma}
Suppose $v_{min} > 0$, $n_{max} > 1$, $\gamma_{max} > 0$,
and $b_2(s_k) \ge 0$.
Then $\alpha_k > 0$.
\label{lem_alpha}
\end{lemma}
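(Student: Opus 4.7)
The plan is simply to unpack the definition of $\alpha_k$ in \eqref{eq_alpha} and observe that each of the two terms inside the $\min$ is strictly positive under the stated hypotheses; the minimum of two positive numbers is positive.

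First I would use $b_2(s_k) \ge 0$ together with the standing assumption $v_{min} > 0$ to conclude $v_k \ge v_{min} > 0$. Combined with $\gamma_{max} > 0$, $\lambda \in (0,1]$ from Definition~\ref{def_ecbf}, and $\delta > 0$ from the dynamics \eqref{eq_fw_dynamics}, this yields $\gamma_{max}\lambda v_k/\delta > 0$. Next I would note that $g > 0$ (gravity) and the hypothesis $n_{max} > 1$ give
\[
g n_{max} - g = g(n_{max} - 1) > 0.
\]

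Since $\alpha_k = \min(\gamma_{max}\lambda v_k/\delta,\; g n_{max} - g)$ and both of its arguments are strictly positive, $\alpha_k > 0$, as claimed.

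There is no real obstacle here; the only thing to be careful about is citing the right ambient assumptions ($\lambda\in(0,1]$, $\delta>0$, $g>0$) rather than treating them as part of the lemma's hypotheses. The lemma is essentially a sanity check that ensures the quantities $\widetilde{n}(s_k)$ in \eqref{eq_n_safe} and $\tau(s_k)$ in \eqref{eq_tau_k} are well defined and behave as intended (in particular that $\tau_k$ has the right sign interpretation as a time-to-zero-pitch), and the proof is a one-line application of the definitions.
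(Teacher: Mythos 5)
Your proof is correct and is exactly the "direct substitution into \eqref{eq_alpha}" that the paper's one-line proof gestures at: $b_2(s_k)\ge 0$ gives $v_k\ge v_{min}>0$, so both arguments of the $\min$ are strictly positive. No difference in approach — you have simply written out the details the paper omits.
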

\begin{proof}
This holds by direct substitution into \eqref{eq_alpha}.
\end{proof}

\begin{theorem}
Let the assumptions of Lemma \ref{lem_alpha} hold.
For $s_k\in \mathbb{R}^{n_s}$, if $b_i(s_k) \ge 0$ for $i=1,\ldots,5$ defined
in \eqref{eq_b},
$s_k\in S$ where $S$ is defined in \eqref{eq_fw_safety_condition}.
\label{th_fw_bf_useful}
\end{theorem}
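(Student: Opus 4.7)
The plan is to peel off the five inequalities one at a time and show that each forces the corresponding safety constraint in \eqref{eq_fw_safety_condition}. Four of the five are essentially by inspection, and only the altitude claim requires thought.

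First, I would observe that $b_1(s_k)\ge 0$ and $b_2(s_k)\ge 0$ yield $v_k\in[v_{min},v_{max}]$ immediately by rearrangement of \eqref{eq_b1} and \eqref{eq_b2}, while $b_3(s_k)\ge 0$ and $b_4(s_k)\ge 0$ similarly give $\gamma_k\in[\gamma_{min},\gamma_{max}]$ from \eqref{eq_b3} and \eqref{eq_b4}. These four steps are purely algebraic and do not invoke Lemma \ref{lem_alpha} at all, so they can be dispensed with in a sentence.

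The only step that needs real argument is showing $z_k\ge z_{min}$ from $b_5(s_k)\ge 0$. The idea is that the correction term $v_k\min(0,\gamma_k)\bigl(\max(\tau_k,0)+\delta\bigr)$ added to $z_k - z_{min}$ in \eqref{eq_b5} is always nonpositive, so nonnegativity of $b_5$ gives something \emph{stronger} than $z_k\ge z_{min}$. To justify nonpositivity I would split on the sign of $\gamma_k$: if $\gamma_k\ge 0$ then $\min(0,\gamma_k)=0$ and the term vanishes, so $b_5(s_k)\ge 0$ is exactly $z_k\ge z_{min}$; if $\gamma_k<0$, then $\min(0,\gamma_k)=\gamma_k<0$ and $v_k>0$ (since $b_2(s_k)\ge 0$ together with the hypothesis $v_{min}>0$ of Lemma \ref{lem_alpha} gives $v_k\ge v_{min}>0$), and $\max(\tau_k,0)+\delta\ge\delta>0$, so the product is strictly negative. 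In either case $z_k - z_{min}\ge -v_k\min(0,\gamma_k)(\max(\tau_k,0)+\delta)\ge 0$.

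A small point I would flag before finishing is that $\tau_k$ must be well-defined for the expression in \eqref{eq_b5} to even make sense; this is exactly what Lemma \ref{lem_alpha} provides, because $\alpha_k>0$ ensures the denominator $\alpha(s_k)/v_k$ in \eqref{eq_tau_k} is nonzero (using again $v_k\ge v_{min}>0$). Assembling the $v_k$, $\gamma_k$, and $z_k$ conclusions then puts $s_k$ in the set $S$ of \eqref{eq_fw_safety_condition}. The main obstacle is really only bookkeeping: making sure the sign analysis of $b_5$ is clean and that the hypotheses of Lemma \ref{lem_alpha} are explicitly cited where they are needed to keep $v_k$ and $\alpha_k$ positive.
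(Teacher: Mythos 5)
Your proposal is correct and follows essentially the same route as the paper's proof: the $v_k$ and $\gamma_k$ bounds fall out of $b_1,\ldots,b_4$ by inspection, and the altitude claim reduces to observing that the correction term in $b_5$ is nonpositive, split on the sign of $\gamma_k$ with Lemma \ref{lem_alpha} supplying $\alpha_k>0$ (and hence a well-defined, positive $\tau_k$) in the $\gamma_k<0$ case. Your observation that nonpositivity of the term only needs $\max(\tau_k,0)+\delta>0$ is a slight economy over the paper's use of $\tau_k>0$, but the argument is the same.
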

\begin{proof}
From the definition of $b_i$ for $i=1,\ldots,4$,
$\gamma_k \in [\gamma_{min}, \gamma_{max}]$
and
$v_k\in [v_{min}, v_{max}]$ so $v_k \ge v_{min} > 0$.
We now show that $z_{k} \ge z_{min}$.
If $\gamma_{k} < 0$
then because $\alpha_k > 0$ by Lemma \ref{lem_alpha},
$\tau_k > 0$. Then
$0
\le b_5(s_k) = z_{k} + v_k\gamma_k(\tau_k + \delta) - z_{min}
\le z_{k} - z_{min}
$.
If $\gamma_k \ge 0$ then
$0 \le b_5(s_k) = z_k - z_{min}$.
Then $s_k \in S$.
\end{proof}

\begin{lemma}
Let the assumptions of Lemma \ref{lem_alpha} hold.
Let $\T_{min}$ and $\T_{max}$ be defined
in \eqref{eq_T_min} and \eqref{eq_T_max}, respectively,
and suppose
$0 \le \T_{min}$,
$\T_{max} \le T_{max}$,
and
$n_{min} \le \min(\cos(\gamma_{min}),\cos(\gamma_{max}))$.
If $s_k\in\mathbb{R}^k$ satisfies $b_i(s_k)\ge 0$ for $i\in\{1,\ldots,4\}$
then for $\U$ and $U$ defined in \eqref{eq_fw_u_safe}
and \eqref{eq_U}, respectively,
$\U(s_k)\in U$.
\label{lem_u_in_U}
\end{lemma}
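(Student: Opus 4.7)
The plan is to verify each component of $\widetilde{u}(s_k) = (\widetilde{T}(s_k), \widetilde{n}(s_k), \widetilde{\mu}(s_k))^T$ lies in its respective interval defining $U$. I would handle the three components in order of increasing difficulty: $\widetilde{\mu}$, $\widetilde{T}$, then $\widetilde{n}$.

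First, the bank angle: since $\widetilde{\mu}(s_k) = 0$ by \eqref{eq_mu_safe} and $\mu_{max} \ge 0$, we immediately have $\widetilde{\mu}(s_k) \in [-\mu_{max}, \mu_{max}]$. Second, for thrust, the assumptions $b_1(s_k)\ge 0$ and $b_2(s_k)\ge 0$ give $v_k \in [v_{min}, v_{max}]$, while $b_3(s_k)\ge 0$ and $b_4(s_k)\ge 0$ give $\gamma_k \in [\gamma_{min}, \gamma_{max}]$. Hence $s_k$ lies in the feasible region of the optimizations defining $\widetilde{T}_{min}$ and $\widetilde{T}_{max}$ in \eqref{eq_T_min}--\eqref{eq_T_max}, so $\widetilde{T}_{min} \le \widetilde{T}(s_k) \le \widetilde{T}_{max}$. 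Combined with the hypotheses $0 \le \widetilde{T}_{min}$ and $\widetilde{T}_{max} \le T_{max}$, this yields $\widetilde{T}(s_k) \in [0, T_{max}]$.

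The main obstacle is the load factor bound, which I would split into two directions. For the upper bound, observe from \eqref{eq_alpha} that $\alpha(s_k) \le gn_{max} - g$, so the min inside \eqref{eq_n_safe} is also at most $gn_{max} - g$. Dividing by $g$ and adding $\cos(\gamma_k) \le 1$ gives $\widetilde{n}(s_k) \le 1 + (n_{max} - 1) = n_{max}$. For the lower bound, the inequality $b_3(s_k) \ge 0$ gives $\gamma_{max} - \gamma_k \ge 0$, so $\tfrac{\lambda v_k}{\delta}(\gamma_{max} - \gamma_k) \ge 0$; together with $\alpha(s_k) > 0$ from Lemma \ref{lem_alpha}, the argument to the min in \eqref{eq_n_safe} is nonnegative. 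Therefore $\widetilde{n}(s_k) \ge \cos(\gamma_k)$.

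It remains to argue $\cos(\gamma_k) \ge \min(\cos(\gamma_{min}), \cos(\gamma_{max}))$ for $\gamma_k \in [\gamma_{min}, \gamma_{max}]$. Since cosine is concave on the relevant pitch interval (which is understood to lie within $[-\pi/2, \pi/2]$ for physically meaningful flight), its minimum over any sub-interval is attained at an endpoint, yielding the desired bound. Combined with the assumption $n_{min} \le \min(\cos(\gamma_{min}), \cos(\gamma_{max}))$, this gives $\widetilde{n}(s_k) \ge n_{min}$, completing the proof that $\widetilde{u}(s_k) \in U$.
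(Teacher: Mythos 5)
Your proof is correct and follows essentially the same route as the paper's: bank angle is trivially in range, thrust is bounded via the feasibility of $s_k$ in the optimizations defining $\widetilde{T}_{min}$ and $\widetilde{T}_{max}$, and the load factor is squeezed between $\cos(\gamma_k)\ge\min(\cos(\gamma_{min}),\cos(\gamma_{max}))\ge n_{min}$ below and $\cos(\gamma_k)+(gn_{max}-g)/g\le n_{max}$ above. The only difference is that you spell out two steps the paper leaves implicit (the feasibility argument for the thrust bound and the endpoint argument for $\cos(\gamma_k)$), which is fine.
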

\begin{proof}
From \eqref{eq_mu_safe},
$\widetilde{\mu}_k = 0 \in [-\mu_{max},\mu_{max}]$
and by assumption $\T_k\in[0,T_{max}]$.
By Lemma \ref{lem_alpha}, $\alpha_k > 0$.
By assumption $b_3(s_k) \ge 0$ and $b_4(s_k) \ge 0$
so $\gamma_k \in [-\gamma_{min}, \gamma_{max}]$.
Then from \eqref{eq_n_safe} we have
$\widetilde{n}_k \ge \cos(\gamma_k) \ge \min(\cos(\gamma_{min}),\cos(\gamma_{max})) \ge n_{min}$.
Similarly, from \eqref{eq_alpha} and \eqref{eq_n_safe},
$\widetilde{n}_k
\le \cos(\gamma_k) + \alpha_k / g
\le \cos(\gamma_k) + (gn_{max} - g)/g
= \cos(\gamma_k) + n_{max} - 1
\le n_{max}$.
\end{proof}

\begin{lemma}
Let the assumptions of Lemma \ref{lem_u_in_U} hold.
For the system \eqref{eq_fw_dynamics},
if $b_i(s_k) \ge 0$ for $i=1,\ldots,5$ in \eqref{eq_b}
where $s_k\in \mathbb{R}^{n_s}$
then
for $i=1,\ldots,5$
\begin{equation}
b_i(f(s_k,\U(s_k))) - (1 - \lambda) b_i(s_k) \ge 0.
\label{eq_b_constraint}
\end{equation}
where $\U(s_k)$ is defined in \eqref{eq_fw_u_safe}.
\label{lem_b_constraint_satisfied}
\end{lemma}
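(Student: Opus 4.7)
The plan is to handle the five barrier components separately, exploiting the structure of the evasive control $\U_k=(\T_k,\N_k,0)^T$. The first key observation is that $\T_k$ in \eqref{eq_T_safe} is defined precisely to cancel the gravity and drag terms in \eqref{eq_fw_dynamics_v}, so $v_{k+1}=v_k$ (and hence $\alpha_{k+1}=\alpha_k$). This immediately gives $b_i(s_{k+1})=b_i(s_k)$ for $i\in\{1,2\}$, so \eqref{eq_b_constraint} holds with slack $\lambda b_i(s_k)$. Next, since $\widetilde\mu_k=0$, substituting $\N_k$ from \eqref{eq_n_safe} into \eqref{eq_fw_dynamics_gamma} yields $\gamma_{k+1}-\gamma_k=(\delta/v_k)\min\!\bigl(\lambda v_k(\gamma_{max}-\gamma_k)/\delta,\alpha_k\bigr)$, which is nonnegative because $\alpha_k>0$ by Lemma \ref{lem_alpha} and $\gamma_{max}-\gamma_k\ge 0$. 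Hence $b_4$ only grows, and for $b_3$ the pitch increment is bounded above by $\lambda b_3(s_k)$ (either it equals the first argument of the min exactly, or it is bounded by $\alpha_k\le \lambda v_k\gamma_{max}/\delta\le \lambda v_k b_3(s_k)/\delta$), giving the required inequality.

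The main obstacle is $b_5$, which is piecewise-defined through $\min(0,\gamma_k)$ and $\max(\tau_k,0)$, so I would split on the sign of $\gamma_k$ and $\gamma_{k+1}$. If $\gamma_k\ge 0$, then $b_5(s_k)=z_k-z_{min}$, $\sin(\gamma_k)\ge 0$, and the step above shows $\gamma_{k+1}\ge 0$, so $b_5(s_{k+1})=z_{k+1}-z_{min}\ge b_5(s_k)$. If $\gamma_k<0$, then from \eqref{eq_alpha} we have $\alpha_k\le \gamma_{max}\lambda v_k/\delta<(\lambda v_k/\delta)(\gamma_{max}-\gamma_k)$, so the min in $\N_k$ equals $\alpha_k$ and $\gamma_{k+1}=\gamma_k+\delta\alpha_k/v_k$ exactly. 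This leaves two sub-cases according to whether $\gamma_{k+1}\le 0$ or $\gamma_{k+1}>0$.

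For the sub-case $\gamma_k<0$ and $\gamma_{k+1}\le 0$, I would expand $b_5(s_{k+1})$ using $\tau_{k+1}=-v_k\gamma_{k+1}/\alpha_k$ and subtract $b_5(s_k)$; the $\alpha_k$ terms cancel cleanly after using $\gamma_{k+1}^2-\gamma_k^2=(\gamma_{k+1}+\gamma_k)(\delta\alpha_k/v_k)$, leaving $b_5(s_{k+1})-b_5(s_k)=\delta v_k(\sin(\gamma_k)-2\gamma_k)$. Because $\sin(x)-x$ is decreasing and vanishes at $0$, $\sin(\gamma_k)\ge\gamma_k$ for $\gamma_k\le 0$, whence $\sin(\gamma_k)-2\gamma_k\ge-\gamma_k\ge 0$, so together with $b_5(s_k)\ge 0$ we obtain \eqref{eq_b_constraint}. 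For the sub-case $\gamma_k<0$ and $\gamma_{k+1}>0$, $b_5(s_{k+1})=z_{k+1}-z_{min}$, and I would compute
\[
b_5(s_{k+1})-(1-\lambda)b_5(s_k)=\lambda(z_k-z_{min})+\delta v_k\sin(\gamma_k)-(1-\lambda)v_k\gamma_k(\tau_k+\delta).
\]
Using $\sin(\gamma_k)\ge\gamma_k$ and the hypothesis $b_5(s_k)\ge 0$, which gives $z_k-z_{min}\ge-v_k\gamma_k(\tau_k+\delta)$, the right-hand side collapses (all $\tau_k+\delta$ terms cancel against the $\lambda$ and $1-\lambda$ weights) to $-v_k\gamma_k\tau_k=v_k^2\gamma_k^2/\alpha_k\ge 0$. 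The hard part is exactly this algebraic collapse together with verifying that the piecewise definition of $b_5$ does not introduce discontinuities that break the bound at the sign-change boundary; the $\sin(x)\ge x$ inequality on $(-\infty,0]$ is what patches the linear surrogate used inside $b_5$ to the actual nonlinear altitude dynamics.
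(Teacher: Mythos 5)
Your proof is correct and follows the same decomposition as the paper's: $b_1,b_2$ via $v_{k+1}=v_k$, $b_4$ via monotonicity of pitch under $\N_k$, $b_3$ via the $\min$ in \eqref{eq_n_safe}, and $b_5$ via the case split on the signs of $\gamma_k$ and $\gamma_{k+1}$ together with $\sin x\ge x$ for $x\le 0$. I checked both of your $b_5$ computations --- the identity $b_5(s_{k+1})-b_5(s_k)=\delta v_k(\sin\gamma_k-2\gamma_k)$ in the sub-case $\gamma_{k+1}\le 0$ (which does rely on $\alpha_{k+1}=\alpha_k$, correctly noted from $v_{k+1}=v_k$) and the collapse to $-v_k\gamma_k\tau_k=v_k^2\gamma_k^2/\alpha_k\ge 0$ in the sub-case $\gamma_{k+1}>0$ --- and both are algebraically correct; they are a valid, slightly more laborious alternative to the paper's direct inequality chains, which simply show $b_5(f(s_k,\U_k))\ge b_5(s_k)$ in every case. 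The one flaw is the parenthetical justification for $b_3$: the chain $\alpha_k\le\lambda v_k\gamma_{max}/\delta\le\lambda v_k b_3(s_k)/\delta$ is false whenever $\gamma_k>0$, since the second inequality requires $\gamma_{max}\le\gamma_{max}-\gamma_k$. This is harmless because the bound you actually need is immediate: the pitch increment $(\delta/v_k)\min\bigl(\lambda v_k(\gamma_{max}-\gamma_k)/\delta,\alpha_k\bigr)$ is always at most $(\delta/v_k)$ times the first argument of the $\min$, i.e., at most $\lambda(\gamma_{max}-\gamma_k)=\lambda b_3(s_k)$, which is exactly the paper's argument. With that one-line repair the proof is complete.
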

\begin{proof}
Denote $\U_k\triangleq \U(s_k)$
which is in $U$ by Lemma \ref{lem_u_in_U}.
Given $\widetilde{u}_k$,
$v_{k+1} = v_k$ so \eqref{eq_b_constraint}
is satisfied for $i=1,2$. Since
from Lemma \ref{lem_alpha} we have $\alpha_k > 0$
and the assumption $b_3(s_k) \ge 0$ implies $\gamma_k \le \gamma_{max}$,
we have $\widetilde{n}_k \ge \cos(\gamma_k)$ which implies
$\gamma_{k+1} \ge \gamma_k$ so \eqref{eq_b_constraint} is satisfied for $i=4$.
For $i=3$ and again given $\widetilde{u}_k$ in \eqref{eq_fw_u_safe},
$\gamma_{k+1}
\le \gamma_k + \lambda (\gamma_{max} - \gamma_k)$,
so that
\begin{IEEEeqnarray*}{l}
b_3(f(s_k, \U_k)) - (1 - \lambda)b_3(s_k) \\
\ge \gamma_{max} - \left(\gamma_k + \lambda (\gamma_{max} - \gamma_k)\right)
    - (1 - \lambda)(\gamma_{max} - \gamma_k) \\
= 0.
\end{IEEEeqnarray*}

We now show \eqref{eq_b_constraint} holds for $i=5$.
Suppose $0\le \gamma_k \le \gamma_{max}$.
Then because $v_k \ge v_{min} > 0$, $z_{k+1} \ge z_k$.
Further, since $\alpha_k > 0$ from Lemma \ref{lem_alpha},
we have $\N_k > \cos(\gamma_k)$ so
$\gamma_{k+1} \ge \gamma_k \ge 0$.
Then $b_5(f(s_k,\U_k)) = z_{k+1} - z_{min} \ge z_k - z_{min} \ge b_5(s_k)$
so \eqref{eq_b_constraint} holds for this case.

Consider then $\gamma_{min} \le \gamma_{k} < 0$.
In this case we have
$\alpha_k \le \lambda v_k (\gamma_{max} - \gamma_k) / \delta$
so
$\N_k = \cos(\gamma_k) + \alpha_k / g$
which implies that
$\gamma_{k+1} = \gamma_k + \delta \alpha_k/v_k$.
Also note $\alpha_k >0$ from Lemma \ref{lem_alpha}.

If $\gamma_k \le -\delta \alpha_k/v_k$
then
$\tau_k \ge \delta$,
$\tau_{k+1}
= -\gamma_{k+1} / (\alpha_k/v_k)
= -(\gamma_k + \delta \alpha_k/v_k) / (\alpha_k/v_k)
= \tau_{k} - \delta
$,
and
$b_5(s_k) = z_k + v_k\gamma_k(\tau_k+\delta) - z_{min}$.
Recall that for $\widetilde{u}_k$ we have $v_{k+1} = v_k$,
from Lemma \ref{lem_alpha} that $\alpha_k > 0$,
and note that since $\gamma_k < 0$ we have $\sin\gamma_k > \gamma_k$.
Then
\begin{IEEEeqnarray*}{l}
b_5(f(s_k, \U_k)) \\
= z_k + \delta v_k\sin\gamma_k + v_k(\gamma_k + \delta \alpha_k/v_k)(\tau_k - \delta + \delta) - z_{min} \\
\ge z_k + \delta v_k\gamma_k + v_k\gamma_k \tau_k - z_{min} \\
= b_5(s_k).
\end{IEEEeqnarray*}
For $-\delta \alpha_k/v_k < \gamma_k < 0$,
we have
$0 < \tau_k < \delta$.
Further,
$\gamma_{k+1} = \gamma_k + \delta \alpha_k / v_k \ge -\delta \alpha_k/v_k + \delta \alpha_k/v_k = 0$
so
$\tau_{k+1} \le 0$.
Then
\begin{IEEEeqnarray*}{rCl}
b_5(f(s_k, \U_k))
&=& z_k + \delta v_k\sin\gamma_k - z_{min} \\
&\ge& z_k + \delta v_k \gamma_k - z_{min} \\
&\ge& z_k + (\delta + \tau_k) v_k \gamma_k - z_{min} \\
&=& b_5(s_k).
\end{IEEEeqnarray*}
\end{proof}

Barrier function composition has been
previously discussed for continuous time
in \cite{glotfelter2017nonsmooth, wang2016multi}.
In \cite{squires2021model}, the authors showed
that the maximum of barrier functions is a barrier function.
For the case of a minimum of barrier functions, we need an additional condition.
This extra 
condition is similar to the shared evasive maneuver assumption
discussed in \cite{squires2022composition}.

\begin{lemma}
Let $q_i$ for $i=1,2$ be real valued functions on $\mathbb{R}^{n_s}$
with $q_3$ defined by $q_3(s_k) = \min_{i=1,2}q_i(s_k)$.
Choose $s_k\in \mathbb{R}^{n_s}$ and assume there exists some $u_k$ such that
$q_i(f(s_k, u_k)) - (1-\lambda)q_i(s_k) \ge 0$ for $i=1,2$.
Then $q_3(f(s_k,u_k)) - (1 - \lambda)q_3(s_k) \ge 0$.
\label{lem_min_b}
\end{lemma}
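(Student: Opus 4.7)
The plan is to argue by cases based on which of $q_1, q_2$ achieves the minimum at the successor state $f(s_k, u_k)$. Without loss of generality I would assume $q_1(f(s_k, u_k)) \le q_2(f(s_k, u_k))$, so that $q_3(f(s_k, u_k)) = q_1(f(s_k, u_k))$; the symmetric case is identical.

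With that reduction, the chain of inequalities I want to establish is
\[
q_3(f(s_k, u_k)) = q_1(f(s_k, u_k)) \ge (1-\lambda)\, q_1(s_k) \ge (1-\lambda)\, q_3(s_k),
\]
from which $q_3(f(s_k, u_k)) - (1-\lambda) q_3(s_k) \ge 0$ follows immediately. The first inequality is precisely the hypothesis applied to $i=1$. The second uses two facts: by definition of $q_3$ as a minimum, $q_3(s_k) \le q_1(s_k)$, and since $\lambda \in (0,1]$ by Definition \ref{def_ecbf} we have $1-\lambda \ge 0$, which preserves the inequality when multiplied through.

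There is no real obstacle; the only subtlety to flag is the sign of $1-\lambda$, which is why the allowed range $\lambda \in (0,1]$ matters. The proof extends to any finite minimum by picking an index attaining $q_3(f(s_k, u_k))$ and repeating the same two-line argument.
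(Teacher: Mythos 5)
Your proof is correct and takes essentially the same route as the paper's: both hinge on applying the hypothesis to the index attaining the minimum at the successor state and then using $q_3(s_k) \le q_i(s_k)$ together with $1-\lambda \ge 0$. The only difference is cosmetic --- by placing the without-loss-of-generality reduction at $f(s_k,u_k)$ rather than at $s_k$, you collapse the paper's two-case analysis into a single chain of inequalities.
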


\begin{proof}
Without loss of generality assume $q_1(s_k) \le q_2(s_k)$.
If $q_1(f(s_k,u_k)) \le q_2(f(s_k,u_k))$
then $c_{q_3}(s_k,u_k) = c_{q_1}(s_k,u_k)\ge 0$.
If $q_1(f(s_k,u_k)) > q_2(f(s_k,u_k))$
then 
\begin{IEEEeqnarray*}{rCl}
c_{q_3}(s_k,u_k)
&=& q_2(f(s_k,u_k)) - (1 - \lambda)q_1(s_k) \\
&\ge& q_2(f(s_k,u_k)) - (1 - \lambda)q_2(s_k) \\
&=& c_{q_2}(s_k,u_k) \ge 0.
\end{IEEEeqnarray*}
\end{proof}

\begin{corollary}
Let $h_1$ and $h_2$ be DT-ECBFs on $D_1$ and $D_2$, respectively.
Suppose that for any $s_k\in D_1\cap D_2$ there is a $u_k \in U$ such that
$c_{h_1}(s_k,u_k) \ge 0$
and 
$c_{h_2}(s_k,u_k) \ge 0$.
Then $h_3$ defined by $h_3(s_k) = \min_{i=1,2}h_i(s_k)$ is a DT-ECBF
on $D_1\cap D_2$.
\label{cor_min_h}
\end{corollary}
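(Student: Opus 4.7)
The plan is to show that $h_3 = \min(h_1, h_2)$ satisfies the three conditions in Definition \ref{def_ecbf} on $D_1 \cap D_2$: namely, that the safe superlevel set is contained in the domain, and that for every state in the domain there is an admissible control satisfying the barrier inequality. The heavy lifting has already been done in Lemma \ref{lem_min_b}, so this corollary is essentially a direct assembly of that lemma with the hypothesized shared-control assumption.

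First I would verify the superlevel set inclusion $C_{h_3} \subseteq D_1 \cap D_2$. Since $h_3(s_k) = \min(h_1(s_k), h_2(s_k))$, we have $h_3(s_k) \ge 0$ if and only if both $h_1(s_k) \ge 0$ and $h_2(s_k) \ge 0$, so $C_{h_3} = C_{h_1} \cap C_{h_2}$. Because $h_1$ and $h_2$ are DT-ECBFs on $D_1$ and $D_2$ respectively, $C_{h_1} \subseteq D_1$ and $C_{h_2} \subseteq D_2$, which gives $C_{h_3} \subseteq D_1 \cap D_2$, as required by Definition \ref{def_ecbf}.

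Next I would fix an arbitrary $s_k \in D_1 \cap D_2$ and invoke the corollary's shared-control hypothesis to obtain some $u_k \in U$ for which $c_{h_1}(s_k, u_k) \ge 0$ and $c_{h_2}(s_k, u_k) \ge 0$ simultaneously. Applying Lemma \ref{lem_min_b} with $q_1 = h_1$, $q_2 = h_2$, and $q_3 = h_3$ immediately yields $c_{h_3}(s_k, u_k) \ge 0$. This is exactly the barrier condition \eqref{eq_bf_constraint} for $h_3$ at $s_k$, so $h_3$ satisfies Definition \ref{def_ecbf} on $D_1 \cap D_2$.

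The only subtlety I anticipate is reconciling the $\lambda$ parameter: Definition \ref{def_ecbf} allows an $s_k$-dependent $\lambda \in (0,1]$, and Lemma \ref{lem_min_b} is stated for an arbitrary fixed $\lambda$. Since both $h_1$ and $h_2$ are assumed to be DT-ECBFs with some $\lambda$ (implicitly the same one used in the hypothesis $c_{h_i}(s_k,u_k)\ge 0$), the same $\lambda$ carries through Lemma \ref{lem_min_b} to $h_3$, so no extra argument is needed. Everything else is bookkeeping, and there is no nontrivial obstacle beyond unpacking the definitions.
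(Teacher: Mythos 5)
Your proposal is correct and follows essentially the same route as the paper's proof: invoke the shared-control hypothesis to get a single $u_k$ and apply Lemma \ref{lem_min_b} to conclude $c_{h_3}(s_k,u_k)\ge 0$. The extra checks you include (the containment $C_{h_3}=C_{h_1}\cap C_{h_2}\subseteq D_1\cap D_2$ and the remark on $\lambda$) are sound bookkeeping that the paper leaves implicit.
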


\begin{proof}
From Lemma \ref{lem_min_b},
for any $s_k\in D_1\cap D_2$, there is a $u_k\in U$
such that $c_{h_3}(s_k, u_k) \ge 0$.
Then $h_3$ is a DT-ECBF.
\end{proof}

\begin{theorem}
Let the assumptions of Lemma \ref{lem_b_constraint_satisfied} hold.
Then $h_{fw}$ in \eqref{eq_bf_fw} is a DT-ECBF.
\label{th_bf}
\end{theorem}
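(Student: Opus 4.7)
The plan is to observe that $h_{fw}$ is a minimum of the five functions $b_1,\ldots,b_5$, and then to chain the single-step satisfaction result (Lemma~\ref{lem_b_constraint_satisfied}) with the composition result (Lemma~\ref{lem_min_b} or equivalently Corollary~\ref{cor_min_h}) to conclude the DT-ECBF property for $h_{fw}$.

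First I would identify the candidate domain $D_{h_{fw}} = \{s_k \in \mathbb{R}^{n_s} : b_i(s_k) \ge 0 \text{ for } i=1,\ldots,5\}$ and check that $C_{h_{fw}} \subseteq D_{h_{fw}}$; this is immediate since $h_{fw}(s_k) \ge 0$ is equivalent to all $b_i(s_k) \ge 0$ by \eqref{eq_bf_fw}, giving in fact $C_{h_{fw}} = D_{h_{fw}}$.

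Next, for any $s_k \in D_{h_{fw}}$ I would take the candidate safe control $\U_k = \U(s_k)$ from \eqref{eq_fw_u_safe}. By Lemma~\ref{lem_u_in_U} this lies in $U$, and by Lemma~\ref{lem_b_constraint_satisfied} it satisfies $b_i(f(s_k,\U_k)) - (1-\lambda) b_i(s_k) \ge 0$ simultaneously for every $i=1,\ldots,5$. This is precisely the \textbf{shared evasive maneuver} hypothesis needed by Lemma~\ref{lem_min_b}: a single $u_k$ enforces the barrier-constraint inequality for all component functions at once.

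Finally I would apply Lemma~\ref{lem_min_b} inductively (or equivalently iterate Corollary~\ref{cor_min_h} four times): the minimum of two functions satisfying the per-component inequalities under a shared $u_k$ also satisfies the inequality, and one extends this to the minimum of any finite collection by pairing the intermediate minimum with the next $b_i$ at each step. This yields $c_{h_{fw}}(s_k, \U_k) \ge 0$, so the Definition~\ref{def_ecbf} requirement is met with the same $\lambda \in (0,1]$ used to build $\U$, and $h_{fw}$ is a DT-ECBF on $D_{h_{fw}}$.

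The only mildly nontrivial step is establishing the shared evasive maneuver condition, but that is already packaged in Lemma~\ref{lem_b_constraint_satisfied}; given that, the present theorem is essentially a bookkeeping application of the minimum-composition lemma, so I expect no real obstacle beyond stating the induction cleanly.
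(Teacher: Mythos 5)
Your proof is correct and follows essentially the same route as the paper: restrict to $s_k$ with all $b_i(s_k)\ge 0$, invoke Lemma~\ref{lem_u_in_U} and Lemma~\ref{lem_b_constraint_satisfied} to get a single admissible $\U_k$ satisfying every component inequality, and then apply Lemma~\ref{lem_min_b} inductively. One small correction: your parenthetical that this is ``equivalently'' iterating Corollary~\ref{cor_min_h} is not quite right --- that corollary requires each component to be a DT-ECBF in its own right, and the paper's Remark explicitly notes that $b_1$ and $b_2$ need not be, which is precisely why the argument must go through Lemma~\ref{lem_min_b} (which only needs the shared control) rather than the corollary.
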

\begin{proof}
Let $s_k\in \mathbb{R}^{n_s}$ where $h_{fw}(s_k) \ge 0$.
Then $b_i(s_k) \ge 0$ for $i=1,\ldots,5$.
Then from Lemma \ref{lem_u_in_U}, $\U_k \in U$
and from Lemma \ref{lem_b_constraint_satisfied},
$b_i$ satisfies \eqref{eq_b_constraint} for $i=1,\ldots,5$.
Then from Lemma \ref{lem_min_b} and induction,
$h_{fw}$ satisfies \eqref{eq_bf_constraint}.
\end{proof}
\begin{remark}
$b_1$ and $b_2$ in \eqref{eq_b1}-\eqref{eq_b2} are not necessarily
barrier functions. For instance, let $v_k = v_{min}$ and
$\pi / 2 \ge \gamma_{k} > \gamma_{max}$.
Then $b_1(s_k) \ge 0$.
Suppose $k = C_{D0} = 0$ so that $\mathcal{D}(v_k,n_k) = 0$.
If $T_{max} = W\sin(\gamma_{max})$, 
then for any $u_k\in U$,
$v_{k+1} 
\le v_k + \delta g(T_{max} / W - \sin(\gamma_k))
= v_k + (\sin(\gamma_{max}) - \sin(\gamma_k))
< v_{min}
$.
A similar calculation shows $b_2$ is not a barrier function
when $v_k = v_{max}$ and $\gamma_k < -\gamma_{max}$.
Hence we use Lemma \ref{lem_min_b} rather than
Corollary \ref{cor_min_h} in the proof of Theorem \ref{th_bf}.
\end{remark}

\section{Barrier Function For A Double Integrator}

\label{sec_examples_dbl_int}

Barrier functions for continuous time
double integrator systems have been previously been analyzed in
\cite{borrmann2015control} and \cite{squires2021barrierPhd}.
A higher order integrator was analyzed
in \cite{wang2017safe}. Here we derive a barrier function
for discrete time where the double integrator dynamics are given by
\begin{IEEEeqnarray}{rCl}
\IEEEyesnumber
\label{eq_dbl_int_dynamics}
\IEEEyessubnumber p_{k+1} &=& p_{k} + \delta v_k \label{eq_dbl_int_dynamics_p}\\
\IEEEyessubnumber v_{k+1} &=& v_{k} + \delta a_k \label{eq_dbl_int_dynamics_v}
\end{IEEEeqnarray}
where the state $s_k=(p_{k},v_k)^T$ consists of position and velocity
while the input $a_k\in[a_{min},a_{max}]$ is the acceleration.

Although this double integrator is linear in the control input,
we use the results in this section for the non-convex self-driving
car system in the next section.
The floor function is denoted by $\lfloor \cdot \rfloor$.
For $\A = (a^-,a^+)^T\in\mathbb{R}^2$ with $a^-\in[a_{min}, 0)$ and $a^+\in(0,a_{max}]$,
let
\begin{IEEEeqnarray}{l}
\U_{dbl}(s_k,\A)
= \begin{cases}
    \max(a^-, -v_k / \delta)) & v_k \ge 0 \\
    \min(a^+, -v_k / \delta)) & v_k < 0
    \end{cases}
    \label{eq_dbl_int_u_k} \\
A(s_k, \A) = \begin{cases} a^- & v_k \ge 0 \\ a^+ & v_k < 0 \end{cases} \\
N(s_k,\A)
    = \lfloor |v_k| / |\delta A(s_k, \A) |\rfloor \label{eq_dbl_int_N_k}
    \label{eq_N}\\
\eta(s_k,\A) =
p_k + \delta N(s_k,\A) v_k \label{eq_eta}\\
\qquad\qquad +\frac{N(s_k,\A)(N(s_k,\A) - 1)}{2}\delta^2 A(s_k, \A) \IEEEnonumber\\
\qquad \qquad + \delta  (v_k + \delta N(s_k,\A)A(s_k, \A)) \IEEEnonumber
\end{IEEEeqnarray}

\begin{lemma}
For a system with dynamics \eqref{eq_dbl_int_dynamics},
suppose for all integers $l\ge 0$ that
$a_{k+l} = \U_{dbl}(s_{k+l},\A)$
for $\A = (a^-,a^+)^T\in\mathbb{R}^2$ with $a^-\in[a_{min}, 0)$ and $a^+\in(0,a_{max}]$.
Then
\begin{enumerate}[a)]
\item $p_{k+l} = \eta(s_k,\A)$ for all $l > N(s_k,\A)$.
\item If $ v_k \ge 0$ then $p_{k+l} \le \eta(s_k,\A)$ for all $l \ge 0$.
    If $ v_k \le 0$ then $p_{k+l} \ge \eta(s_k,\A)$ for all $l \ge 0$.
\item $\eta$ is continuous in $s_k$ and $\eta(s_k,\A) = \eta(f(s_k,\U_{dbl}(s_k,\A)),\A)$.
\item $\eta$ increases as $v_k$ increases.
\end{enumerate}
\label{lem_dbl_int_eta}
\end{lemma}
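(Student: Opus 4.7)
The plan is to establish part (a) by unrolling the trajectory explicitly under the control $\U_{dbl}$, after which parts (b), (c), and (d) follow with modest additional work. I will argue the case $v_k \ge 0$; the case $v_k < 0$ is symmetric under swapping $a^-$ with $a^+$ and reversing the relevant inequalities. Throughout, write $N := N(s_k,\A)$ and $A := A(s_k,\A) = a^-$.

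For part (a), I would prove by induction on $l$ that for $0 \le l \le N$ the cap $-v_{k+l}/\delta$ in the definition of $\U_{dbl}$ is inactive and $v_{k+l} = v_k + l\delta a^-$. The key observation is that $l \le N = \lfloor v_k/|\delta a^-|\rfloor$ forces $v_k + l\delta a^- \ge 0$, while the cap threshold $v_{k+l} < \delta|a^-|$ is reached only at $l = N$. At step $N$ the cap activates and gives $v_{k+N+1} = 0$; thereafter $\U_{dbl} = 0$ and the state is stationary. Summing the arithmetic progression yields $p_{k+N} = p_k + N\delta v_k + \delta^2 A N(N-1)/2$, and one additional position update of $\delta v_{k+N}$ with $v_{k+N} = v_k + N\delta A$ reproduces the closed form in \eqref{eq_eta}. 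Hence $p_{k+l} = \eta(s_k,\A)$ for every $l > N$.

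Part (b) then follows because $\U_{dbl}$ is sign-preserving: the cap prevents velocity from crossing zero, so when $v_k \ge 0$ we have $v_{k+l} \ge 0$ for all $l$. This makes $p_{k+l}$ nondecreasing, and part (a) pins its eventual value to $\eta(s_k,\A)$, giving $p_{k+l} \le \eta(s_k,\A)$ throughout. Part (d) is handled on each region $\{N(\cdot,\A) = M\}$, where $\eta$ is linear in $v_k$ with slope $(M+1)\delta > 0$, combined with the continuity from part (c) across the region boundaries.

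The interesting step is part (c). On each region of constant $N = M$, $\eta$ is polynomial in $s_k$ and therefore continuous, so the only candidates for discontinuity are the thresholds $v_k = -M\delta A$ and the sign-change point $v_k = 0$. At such a threshold I would substitute the two neighboring values $N = M$ and $N = M-1$ into \eqref{eq_eta} and check that they agree; the difference simplifies to $\delta v_k + \delta^2 A M$, which vanishes precisely at $v_k = -M\delta A$. At $v_k = 0$ both one-sided limits give $N = 0$ and $\eta = p_k$, closing the continuity argument. The invariance $\eta(s_k,\A) = \eta(f(s_k,\U_{dbl}(s_k,\A)),\A)$ is then immediate from part (a): both sides equal the eventual position of the same trajectory, started one step apart. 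The main obstacle is this algebraic cancellation at the floor-function thresholds, which is the one place where the structure of $N$ must be used nontrivially rather than merely unrolled.
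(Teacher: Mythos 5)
Your proposal is correct, and for parts (a) and (b) it follows essentially the same route as the paper: an induction showing the velocity decays arithmetically until step $N$, a capped final step driving $v$ to zero, and a summation giving the closed form for $p_{k+N+1}=\eta(s_k,\A)$, with monotonicity of position giving (b). Where you genuinely diverge is in (c) and (d). For continuity, the paper avoids the floor-function thresholds entirely: it fixes a horizon $M$, observes that $\eta(s_k,\A)=p_{k+M}$ on the open set where $N(s_k,\A)+1<M$, and notes that the $M$-step flow map is a composition of the continuous functions \eqref{eq_dbl_int_u_k} and \eqref{eq_dbl_int_dynamics}; since $M$ is arbitrary this covers all of $\mathbb{R}^{n_s}$. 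Your approach instead verifies that the piecewise-polynomial formula agrees across the breakpoints $v_k=M\delta|a^-|$ (the difference $\delta v_k+M\delta^2 A$ indeed vanishes there) and at $v_k=0$; this is more computational but makes the cancellation explicit. For the invariance $\eta(s_k,\A)=\eta(f(s_k,\U_{dbl}(s_k,\A)),\A)$, your observation that both sides are the eventual position of the same trajectory started one step apart is cleaner than the paper's direct substitution of $N(s_{k+1},\A)=N(s_k,\A)-1$ into \eqref{eq_eta} and the attendant algebra. For (d), the paper compares two trajectories pointwise ($v_{a,k+l}\ge v_{b,k+l}$ for all $l$, hence the positions and their limits are ordered), which is coordinate-free, while your piecewise-linearity argument (slope $(N+1)\delta>0$ on each region, glued by continuity) leans on the explicit formula; both are sound, and yours has the small added benefit of quantifying the slope.
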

\begin{proof}
Consider the case where $v_k \ge 0$ as the other case follows similarly.
We first show that for nonnegative integers $l$
\begin{equation}
v_{k+l} =
\begin{cases}
    v_k + l\delta a^- & l \le N(s_k,\A) \\
    0 & l > N(s_k, \A)
\end{cases}.
\label{eq_dbl_int_solution_v}
\end{equation}
This holds trivially for $l = 0$.
Suppose it holds for some $l$ satisfying $0 \le l \le N(s_k,\A) - 1$.
Then from \eqref{eq_N},
$v_k
\ge N(s_k,\A)\delta |a^-|
\ge (l+1)\delta |a^-|
= -(l+1)\delta a^-$.
Then from the induction hypothesis,
$v_{k+l} = v_k + l\delta a^- \ge -(l+1)\delta a^- + l\delta a^- = -\delta a^-$
so $-v_{k+l}/\delta \le a^-$
and $\U_{dbl}(s_{k+l}, \A) = a^-$.
Then $v_{k+l+1} = v_{k+l} + \delta a^- = v_k + (l+1)\delta a^-$
so \eqref{eq_dbl_int_solution_v} holds for $l \le N(s_k,\A)$.
When $l = N(s_k,\A)$, we have
\begin{equation}
v_{k+l} = v_k + N(s_k,\A)\delta a^- = v_k + \lfloor v_k / (\delta |a^-|) \delta a^-.
\label{eq_v_k_plus_l}
\end{equation}
From \eqref{eq_v_k_plus_l} we then have
\begin{equation}
v_{k+l} \ge v_k + (v_k / \delta |a^-|)\delta a^- = 0.
\label{eq_v_k_plus_l_gt}
\end{equation}
Similarly, from \eqref{eq_v_k_plus_l}
and recalling that $a^- < 0$ so 
$\lfloor v_k/(\delta |a^-|)\rfloor \delta a^- \le (v_k/(\delta |a^-|) - 1)\delta a^-$,
\begin{equation}
v_{k+l} \le v_k + (v_k / (\delta |a^-|) - 1)\delta a^- = -\delta a^-.
\label{eq_v_k_plus_l_lt}
\end{equation}
Then from \eqref{eq_v_k_plus_l_gt}, \eqref{eq_v_k_plus_l_lt},
and \eqref{eq_dbl_int_u_k},
we have $\U_{dbl}(s_{k+l}) = -v_{k+l} / \delta$
so $v_{k+l+1} = v_{k+l} - \delta v_{k+l}/\delta = 0$
and it follows trivially by induction that $v_{k+l} = 0$
for all $l > N(s_k, \A)$.

We now show by induction that 
for nonnegative integers $l$ with $l \le N(s_k, \A)$,
\begin{equation}
p_{k+l} = p_k + \delta l v_k + \frac{l(l - 1)}{2}\delta^2  a^-.
\label{eq_dbl_int_solution_p}
\end{equation}
This holds trivially for $l = 0$.
Suppose it holds for some $l$ where $0 \le l \le N(s_k,\A) - 1$.
Then since $l(l-1)/2 + l = (l+1)l / 2$,
\begin{IEEEeqnarray*}{rCl}
p_{k+l+1}
&=& p_{k+l} + \delta  v_{k+l} \\
&=& p_k + \delta l  v_k + \frac{l(l-1)}{2}\delta^2  a^- + \delta  (v_k + \delta la^-) \\
&=& p_k + \delta (l+1)  v_k + \frac{(l+1)l}{2}\delta^2  a^-.
\end{IEEEeqnarray*}
Then \eqref{eq_dbl_int_solution_p} holds for $0 \le l \le N(s_k,\A)$.
For $l = N(s_k,\A)$, note that
$p_{k+l+1} = p_{k+l} + \delta  v_{k+l} = p_{k+l} + \delta  (v_k + \delta N(s_k,\A)a^-) = \eta(s_k,\A)$.
Since $v_{k+l+1} = 0$ for all $l > N(s_k,\A)$,
$p_{k+l} = p_{k+N(s_k,\A)+1}$.
Since $v_{k+l} \ge 0$ for all $l\ge 0$,
$p_{k+l}$ is a non-decreasing function
so $p_{k+l} \le \eta(s_k,N_k,\A)$ for all $l \ge 0$.
Hence (a) and (b) hold.

We now show that $\eta$ is continuous.
Choose some positive integer $M$.
It has already been shown in part (a) that if $M > v_k / \delta a^- + 1$
that $\eta(s_k,\A) = p_{k+M}$.
But the mapping $p_k \to p_{k+M}$ using the control
law $\U_{dbl}$ in \eqref{eq_dbl_int_u_k}
is continuous because \eqref{eq_dbl_int_u_k}
and \eqref{eq_dbl_int_dynamics} are continuous.
Then $\eta$ is continuous on the set $\{s_k \in \mathbb{R}^{n_u}\::\: v_k / \delta a^- + 1 < M\}$.
Since $M$ can be chosen to be arbitrarily large,
$\eta$ is continuous on $\mathbb{R}^{n_s}$.

We now prove that $\eta(s_k,\A) = \eta(f(s_k,\U_{dbl}(s_k,\A)),\A)$.
Note again we are considering the case of
$v_k \ge 0$ since the other case follows similarly.
Denote $s_{k+1} \triangleq f(s_k,\U_{dbl}(s_k,\A))$.

Suppose $N(s_k,\A) = 0$.
Then $\eta(s_k) = p_k + \delta v_k$.
Also, since $N(s_k,\A)= 0$ we have $v_k < \delta |a^-|$, so given 
$\U_{dbl}$ in \eqref{eq_dbl_int_u_k},
$v_{k+1} = 0$.
Then $N(s_{k+1},\A) = 0$ so
$\eta(s_{k+1}) = p_{k} +\delta v_{k}$.

Suppose $N(s_k,\A) \ge 1$.
Then
$N(s_{k+1},\A)
= \lfloor |(v_k + \delta a^-)| / |\delta a^-|\rfloor
= \lfloor (v_k + \delta a^-) / |\delta a^-|\rfloor
= N(s_k,\A) - 1$.
Then
\begin{IEEEeqnarray*}{rCl}
\eta(s_{k+1},\A)
&=&
p_k + \delta  v_k + \delta (N(s_k,\A) - 1) (v_k + \delta a^-) \\
&& + \frac{(N(s_k,\A) - 1)(N(s_k,\A) - 2)}{2}\delta^2  a^- \\
&& + \delta  (v_k + \delta a^- + \delta (N(s_k,\A) - 1)a^-).
\end{IEEEeqnarray*}
Noting $(l-1)(l-2)/2 = l(l-1)/2 + 1 - l$, we have
\begin{IEEEeqnarray*}{l}
\eta(s_{k+1},\A) \\
= p_k + \delta v_k + \delta N(s_k, \A)v_k + \delta^2 N(s_k,\A)a^- - \delta v_k - \delta^2a^- \\
+ \frac{N(s_k,\A)(N(s_k,\A) - 1)}{2}\delta^2a^- + \delta^2 a^- - N(s_k,\A)\delta^2a^- \\
+ \delta (v_k + \delta N(s_k,\A)a^-) + \delta^2a^- - \delta^2a^- \\
= \eta(s_k,\A).
\end{IEEEeqnarray*}
Finally, we show that as $v_k$ increases, $\eta$ increases.
Consider $s_{a,k} = (p_{a,k},v_{a,k})^T$ and $s_{b,k} = (p_{b,k},v_{b,k})^T$
for $p_{a,k} = p_{b,k}$ and $v_{a,k} = v_{b,k} + \epsilon$ for some $\epsilon > 0$.
From \eqref{eq_dbl_int_solution_v},
$v_{a,k+l} \ge v_{v,k + l}$ for all integers $l\ge 0$.
Then since $p_{a,k} = p_{b,k}$ and
$v_{a,k+l} \ge v_{b,k+1}$ for all $l\ge 0$, $p_{a,k+l} \ge p_{b,k+l}$
for all $l\ge 0$.
Then $\eta(s_{a,k}) \ge \eta(s_{b,k})$
follows from Lemma \ref{lem_dbl_int_eta}a.
\end{proof}

By Lemma \ref{lem_dbl_int_eta}a, the output of $\eta$
is where the position ends after using $\U_{dbl}$ for all time.
Hence for $p_{min},p_{max}\in \mathbb{R}$, let
\begin{IEEEeqnarray}{l}
h_{L,dbl}(s_k) =  \min(p_k, \eta(s_k,\A)) - p_{min} \quad \label{eq_dbl_int_bf_low}\\
h_{H,dbl}(s_k) =  p_{max} - \max(p_k, \eta(s_k,\A)) \quad \label{eq_dbl_int_bf_high}.
\end{IEEEeqnarray}

\begin{theorem}
For the system \eqref{eq_dbl_int_dynamics},
$h_{L,dbl}$ and $h_{H,dbl}$ defined in \eqref{eq_dbl_int_bf_low} and \eqref{eq_dbl_int_bf_high}
are DT-ECBFs.
\label{th_dbl_int_bf}
\end{theorem}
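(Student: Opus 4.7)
The plan is to display the evasive maneuver $\U_{dbl}(\,\cdot\,,\A)$ from \eqref{eq_dbl_int_u_k} as the common witness to the DT-ECBF condition and to show that, under this control, $h_{L,dbl}$ is monotone nondecreasing along trajectories starting in $C_{h_{L,dbl}}$. This immediately yields $h_{L,dbl}(f(s_k,\U_{dbl}(s_k,\A))) - (1-\lambda)h_{L,dbl}(s_k) \ge \lambda h_{L,dbl}(s_k) \ge 0$ for any $\lambda\in(0,1]$. The claim for $h_{H,dbl}$ follows by an entirely symmetric argument in which the roles of $a^-$ and $a^+$, and of $\min$ and $\max$, are interchanged, so I would write out only the case of $h_{L,dbl}$ in detail.

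First I would verify admissibility: on $\{v_k\ge 0\}$, $\U_{dbl}(s_k,\A)$ takes values in $[a^-,0]\subseteq[a_{min},a_{max}]$, and on $\{v_k<0\}$ it takes values in $(0,a^+]\subseteq[a_{min},a_{max}]$. I would also record the elementary fact that $\U_{dbl}$ is defined precisely so as not to overshoot zero, which forces $v_{k+1}$ to share the (weak) sign of $v_k$.

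Next I would perform a case split on the sign of $v_k$ and use Lemma \ref{lem_dbl_int_eta}(b) at both $s_k$ and $s_{k+1}$ to identify which argument of $\min(p_k,\eta(s_k,\A))$ is binding. In the case $v_k\ge 0$, the lemma gives $p_k\le\eta(s_k,\A)$, so $h_{L,dbl}(s_k)=p_k-p_{min}$; since $v_{k+1}\ge 0$ implies $p_{k+1}\ge p_k$, a second application of the lemma at $s_{k+1}$ yields $p_{k+1}\le\eta(s_{k+1},\A)$, whence $h_{L,dbl}(s_{k+1})=p_{k+1}-p_{min}\ge h_{L,dbl}(s_k)$. In the case $v_k<0$, the lemma gives $p_k\ge\eta(s_k,\A)$, so $h_{L,dbl}(s_k)=\eta(s_k,\A)-p_{min}$; since $v_{k+1}\le 0$, the lemma at $s_{k+1}$ gives $p_{k+1}\ge\eta(s_{k+1},\A)$, and Lemma \ref{lem_dbl_int_eta}(c) supplies the invariance $\eta(s_{k+1},\A)=\eta(s_k,\A)$, whence $h_{L,dbl}(s_{k+1})=\eta(s_k,\A)-p_{min}=h_{L,dbl}(s_k)$.

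The main subtlety I anticipate is the boundary case $v_{k+1}=0$, where the binding term in $\min(p,\eta)$ flips between steps $k$ and $k+1$; this is resolved by observing that $v_{k+1}=0$ forces $N(s_{k+1},\A)=0$, which in turn collapses \eqref{eq_eta} to $\eta(s_{k+1},\A)=p_{k+1}$, so the two characterizations agree and the inequalities used above hold with equality. Apart from this coincidence check, the whole argument is a mechanical combination of the monotonicity and invariance properties already proved in Lemma \ref{lem_dbl_int_eta} with the one-step update $p_{k+1}=p_k+\delta v_k$; no attempt to decompose $h_{L,dbl}$ via Lemma \ref{lem_min_b} is needed, which is fortunate since $p_k-p_{min}$ on its own is not a DT-ECBF whenever $v_k<0$.
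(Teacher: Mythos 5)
Your proposal is correct and follows essentially the same route as the paper's proof: exhibit $\U_{dbl}(\cdot,\A)$ as the witness control, split on the sign of $v_k$, and use Lemma \ref{lem_dbl_int_eta}(b) to identify the binding term of $\min(p_k,\eta(s_k,\A))$ at both $s_k$ and $s_{k+1}$, with Lemma \ref{lem_dbl_int_eta}(c) supplying $\eta(s_{k+1},\A)=\eta(s_k,\A)$ in the $v_k<0$ case. Your additional remarks on admissibility of $\U_{dbl}$ and on the boundary case $v_{k+1}=0$ are sound but not points the paper elaborates.
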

\begin{proof}
Choose $s_k$ such that $h_{L,dbl}(s_k) \ge 0$.
Denote $s_{k+1}$ as the state after applying $\U_{dbl}(s_k,\A)$
given dynamics \eqref{eq_dbl_int_dynamics}.

If $v_k \ge 0$ then $p_{k+1} \ge p_k$.
Further, by Lemma \ref{lem_dbl_int_eta}b,
$p_k \le \eta(s_k,\A)$. Given
$a_k = \U_{dbl}(s_k,\A)$,
we have $v_{k+1} \ge 0$ so, again by Lemma \ref{lem_dbl_int_eta}b,
$p_{k+1} \le \eta(s_{k+1})$.
Then
$h_L(s_{k+1}) = p_{k+1} - p_{min} \ge p_k - p_{min} = h_L(s_k)$.

If $v_k < 0$ then similarly by Lemma \ref{lem_dbl_int_eta}b
and given $\U_{dbl}(s_k)$,
$p_k \ge \eta(s_k,\A)$,
$v_{k+1} \le 0$,
$p_{k+1} \ge \eta(s_{k+1},\A)$.
Then by Lemma \ref{lem_dbl_int_eta}c,
$h_{L,dbl}(s_{k+1}) = \eta(s_{k+1},\A) - p_{min} = \eta(s_{k},\A) - p_{min} = h_{L,dbl}(s_k)$.
The proof for \eqref{eq_dbl_int_bf_high} is similar.
\end{proof}
\begin{remark}
From Lemma \ref{lem_dbl_int_eta}c, $\eta$ is continuous.
Hence $h_{L,dbl}$ and $h_{H,dbl}$ are continuous.
\end{remark}

\section{Barrier Function for Lane Merging and Adaptive Cruise Control}

\label{sec_examples_car}

Consider a driving scenario where a vehicle is on a road with two lanes going
in the same direction. Note \cite{ames2016control} also analyzed this problem
for lane keeping and adaptive cruise control. The approach here differs in that
the barrier functions allow for changing lanes, account for multiple lead
cars in multiple lanes, the dynamics are not linear in the control input,
and the model is in discrete time. Deriving a barrier
function in discrete time can be challenging
because we cannot take derivatives
with respect to time to find the minimum of a function.
However, discrete time 
barrier functions do not need to be continuously
differentiable
so this provides some extra flexibility.
In particular, in this section we use the fact that
the maximum or minimum
of barrier functions is a barrier function.
See Corollary \ref{cor_min_h}.

Consider a case where there are
two lead vehicles as well
as an autonomous vehicle with states $s_{j,k}$ for $j=1,2,3$, respectively.
Index $j=3$ refers to the autonomous car while indexes $j=1,2$ refers
to lead cars in lanes $1$ and $2$, respectively.
The kinematic bicycle model vehicle with state $s_{j,k} = (x_{j,k}, y_{j,k}, v_{j,k}, \psi_{j,k})^T$
is given by (see equation (2) of \cite{polack2017kinematic})
\begin{IEEEeqnarray}{rCl}
\IEEEyesnumber
\label{eq_car_dyn}
\IEEEyessubnumber x_{j,k+1} &=& x_{j,k} + \delta v_{j,k}\cos(\psi_{j,k}+\beta(u_{2,j,k})) \\
\IEEEyessubnumber y_{j,k+1} &=& y_{j,k} + \delta v_{j,k}\sin(\psi_{j,k}+\beta(u_{2,j,k})) \\
\IEEEyessubnumber v_{j,k+1} &=& v_{j,k} + \delta u_{1,j,k} \\
\IEEEyessubnumber \psi_{j,k+1} &=& \psi_{j,k} + \delta \frac{v_{j,k}}{l_{j,r}}\sin(\beta(u_{2,j,k})) \label{eq_car_dyn_psi}
\end{IEEEeqnarray}
where $x_{j,k}$ ($y_{j,k}$) is the $x$ ($y$)-position,
$v_{j,k}$ is the speed,
$\psi_{j,k}$ is the angle between the centerline of the road and the vehicle velocity,
$\beta(u_{2,j,k}) = \tan^{-1}(\tan(u_{2,j,k})l_{j,r}/(l_{j,f}+l_{j,r}))$,
$u_{1,j,k}\in[a_{min},a_{max}]$ is the acceleration, $u_{2,j,k}\in[-u_{2,max},u_{2,max}]$
is the front wheel steering angle,
and $l_{j,f}$ ($l_{j,r}$) is the distance between the front (rear) axle and center of gravity.
The state of the system is $s_j = ([s_{1,k}]^T, [s_{2,k}]^T, [s_{3,k}]^T)^T$.

For brevity let $b_{min} = \sin(\beta(-u_{2,max}))$ and $b_{max} = \sin(\beta(u_{2,max}))$.
Then
$\beta^{-1}(\sin^{-1}(\cdot))$
where $\beta^{-1}(\cdot) = \tan^{-1}(\tan(\cdot)(l_{j,f} + l_{j,r})/l_{j,r})$
is well defined
for any value within $[b_{min}, b_{max}]$ given a small enough $u_{2,max}$.
Let $\U_{steer}(s_{3,k}) = 0$ for $v_{3,k} = 0$ and 
\begin{IEEEeqnarray*}{l}
\U_{steer}(s_{3,k}) = \label{eq_car_u_k_steer} \IEEEyesnumber\\
\quad
\beta^{-1}(\sin^{-1}(
\min\{
    \max[
        -\psi_{3,k} l_{3,r}/(\delta v_{3,k}),b_{min}
    ], b_{max}
\}
)) 
\end{IEEEeqnarray*}
otherwise. Let
\begin{IEEEeqnarray*}{l}
\U_3(s_{3,k}, \A_3) = (\U_{dbl}(s_{3,k},\A_3),\U_{steer}(s_{3,k}))^T
\label{eq_car_u_k} \IEEEyesnumber
\end{IEEEeqnarray*}
for $\A_3=[a_3^-, a_3^+]^T$ for $a_3^-\in [u_{1,min},0)$, $a_3^+\in(0,u_{1,max}]$ and
for $\U_{k,dbl}$ defined in \eqref{eq_dbl_int_u_k}. 

\begin{figure}
\centering
\def\svgwidth{0.90\columnwidth}
\input{./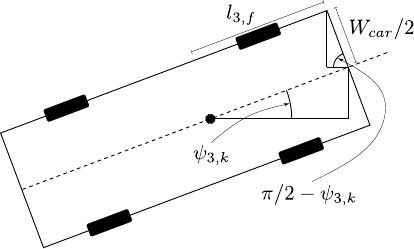_tex}
\caption{
Computing the lateral offset from the state
so that the edge of the car does not exit a lane.
}
\label{fig_car_offset}
\end{figure}

Given the car position of
$y_{3,k}$, the largest offset of the vehicle given a width $W_{car}$ 
is given by (see Figure \ref{fig_car_offset})
\begin{equation}
O(s_{3,k}) = l_{3,f}|\sin(\psi_{3,k})| + (W_{car}/2)|\cos(\psi_{3,k})|.
\label{eq_car_offset}
\end{equation}
To stay in lane 1,
the car must maintain $y_{3,k} \in [O(s_{3,k}), W_{lane} - O(s_{3,k})]$ where $W_{lane}$ is the width
of the lane.
Similarly, to stay in lane 2,
the car must maintain $y_{3,k}\in [W_{lane} + O(s_{3,k}),2W_{lane} - O(s_{3,k})]$.
Let
\begin{IEEEeqnarray*}{rCl}
\rho_{L^1}(s_k) &=& y_{3,k} - O(s_{3,k}) \\
\rho_{H^1}(s_k) &=& W_{lane} - O(s_{3,k}) - y_{3,k} \\
\rho_{L^2}(s_k) &=& y_{3,k} - O(s_{3,k}) - W_{lane} \\
\rho_{H^2}(s_k) &=& 2W_{lane} - O(s_{3,k}) - y_{3,k}.
\end{IEEEeqnarray*}
The subscript $L^i$ ($H^i$) means the lower (higher) boundary for lane $i$ for $i=1,2$.
Construct $h_{L^1},h_{H^1},h_{L^2},$ and $h_{H^2}$
via \eqref{eq_bf} using evasive maneuver $\zeta = \U_3$
in \eqref{eq_car_u_k}
and safety functions $\rho_{L^1},\rho_{H^1},\rho_{L^2}$, and $\rho_{H^2}$, respectively.
Then by Theorem 1 of \cite{squires2021model}, 
$h_{L^1},h_{H^1},h_{L^2},$ and $h_{H^2}$ are barrier functions.
While \eqref{eq_bf} requires an evaluation
over an infinite horizon, given $\zeta = \U_3$,
$v_{3,k+l}$ will be zero within $N((y_{3,k},v_{3,k})^T, \A)+1$ timesteps
so \eqref{eq_bf} can be evaluated over $N((y_{3,k},v_{3,k})^T, \A)+1$
timesteps.
If $v_{3,k}$ is small, then $N((y_{3,k},v_{3,k})^T, \A)$ will
be small. If $v_{3,k}$ is large then
$\psi_{3,k+l}$ will also be zero reasonably quickly
(see \eqref{eq_car_dyn_psi} and \eqref{eq_car_u_k_steer}).
See Figure \ref{fig_num_steps}
for a computation of the number of timesteps
required for a variety of settings of $v_{3,k}$ and $\psi_{3,k}$
The maximum number of steps is 33 over this range.

\begin{figure}
    \includegraphics{./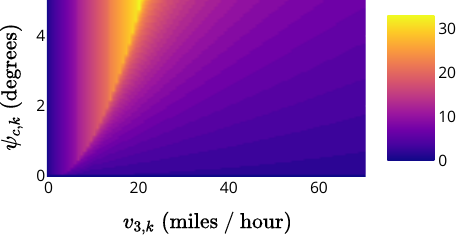}
    \caption{
        The number of steps to evaluate a barrier function
        in \eqref{eq_bf}
        for keeping a safe distance from lane boundaries.
    }
    \label{fig_num_steps}
\end{figure}

We now consider how to ensure that the car with state $s_{3,k}$ maintains
a safe distance from the lead cars $s_{1,k}$ and $s_{2,k}$, the cars in
lanes $1$ and $2$, respectively.
The \emph{lead car assumption} says that the lead cars stay in their
lane, maintain a non-negative speed, car with index $1$ and $2$ are in
lanes $1$ and $2$ respectively,
and the lead cars do not turn.

\begin{assumption}
(Lead Car Assumption) For all integers $k$,
$0 \le y_{1,k} \pm W_{car} \le W_{lane} \le y_{2,k} \pm W_{car} \le 2 W_{lane}$,
$\psi_{1,k}=\psi_{2,k} = 0$, and $v_{1,k} = v_{2,k} \ge 0$.
\label{assumption_lead_car}
\end{assumption}

Under the lead car assumption, $u_{2,j,k} = 0$
so the dynamics of lead car $j=1,2$
is given by
\begin{IEEEeqnarray*}{rCl}
x_{j,k+1} &=& x_{j,k} + \delta v_{j,k} \\
v_{j,k+1} &=& v_{j,k} + \delta u_{1,j,k}.
\end{IEEEeqnarray*}

Let $D_{lead} \ge 0$ be the 
minimum allowed intervehicle distance
and $\tau_{head}$ the minimum allowed headway
so that the distance from the lead
and following car must satisfy $(x_{j,k} - x_{3,k} - D_{lead}) - v_{3,k}\tau_{head}$
for all $k$ and $j=1,2$.
The time headway distance is a common metric
for safe distances in both barrier functions
and MPC \cite{ames2016control,katriniok2023discrete}.
For $j=1,2$ with $D_{lead} \ge 0$ and $\tau_{head} \ge 0$, let
\begin{IEEEeqnarray*}{rCl}
\widetilde{v}_{3,k} &=& \max(0, v_{3,k}) \\
d_{j,lead}(s_k)
    &=& \eta((x_{j,k},\min(v_{j,k},\widetilde{v}_{3,k}))^T, (u_{1,min},u_{1,max})) \\
   && \qquad - \eta((x_{3,k},\widetilde{v}_{3,k})^T,\A_3) \\
h_{j,lead}(s_k)
    &=& d_{j,lead}(s_k) - D_{lead} - \widetilde{v}_{3,k} \tau_{head}
\IEEEyesnumber \label{eq_car_lead_bf}
\end{IEEEeqnarray*}
\begin{theorem}
For a system with state $s_k = ([s_{1,k}]^T, [s_{2,k}]^T, [s_{3,k}]^T)^T$ where $s_{j,k}$ 
has dynamics in \eqref{eq_car_dyn} for $j=1,2,3$,
suppose that the lead car assumption holds. Then
$h_{j,lead}$ defined in \eqref{eq_car_lead_bf} for $j=1,2$
is a DT-ECBF.
\label{th_car_bf}
\end{theorem}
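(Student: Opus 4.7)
The plan is to take $u_{3,k} = \U_3(s_{3,k}, \A_3)$ from \eqref{eq_car_u_k} as the ego's evasive control and show the stronger one-step inequality $h_{j,lead}(f(s_k, u_k)) \ge h_{j,lead}(s_k)$, which yields \eqref{eq_bf_constraint} with $\lambda = 1$ on $C_{h_{j,lead}}$. Writing $d_{j,lead}(s_k) = \eta^L_k - \eta^E_k$ for the two $\eta$ terms of \eqref{eq_car_lead_bf}, the desired increment decomposes as $(\eta^L_{k+1} - \eta^L_k) - (\eta^E_{k+1} - \eta^E_k) - (\widetilde{v}_{3,k+1} - \widetilde{v}_{3,k})\tau_{head}$, so it suffices to establish three claims that must hold for every admissible lead-car action under Assumption \ref{assumption_lead_car}: (a) $\eta^E_{k+1} \le \eta^E_k$, (b) $\eta^L_{k+1} \ge \eta^L_k$, and (c) $\widetilde{v}_{3,k+1} \le \widetilde{v}_{3,k}$.

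Claims (a) and (c) follow directly from Lemma \ref{lem_dbl_int_eta}. For $v_{3,k}\ge 0$ the ego speed update $v_{3,k+1} = v_{3,k} + \delta \U_{dbl}(s_{3,k}, \A_3)$ coincides with the pure double-integrator update and produces $v_{3,k+1}\in[0,v_{3,k}]$, giving (c); the case $v_{3,k} < 0$ has $\widetilde{v}_{3,k+1} = \widetilde{v}_{3,k} = 0$. For (a), the longitudinal update in \eqref{eq_car_dyn} is contracted by the factor $\cos(\psi_{3,k}+\beta(\U_{steer}(s_{3,k})))\le 1$ relative to the double integrator, so $x_{3,k+1} \le x_{3,k} + \delta v_{3,k}$ whenever $v_{3,k}\ge 0$. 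Applying Lemma \ref{lem_dbl_int_eta}c at the synthetic state $(x_{3,k} + \delta v_{3,k},\, v_{3,k+1})$ shows that its $\eta$ equals $\eta^E_k$, and translation-monotonicity of $\eta$ in $p_k$, immediate from \eqref{eq_eta}, then yields $\eta^E_{k+1} \le \eta^E_k$.

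Claim (b) is the main obstacle because the lead cars are adversarial within their actuator limits and $d_{j,lead}$ uses the asymmetric quantity $w_{j,k} \triangleq \min(v_{j,k}, \widetilde{v}_{3,k})$. Under Assumption \ref{assumption_lead_car}, $\psi_{j,k}\equiv 0$ and $v_{j,k+1}\ge 0$, so $x_{j,k+1} = x_{j,k} + \delta v_{j,k} \ge x_{j,k} + \delta w_{j,k}$. The key observation is that because $a_3^- \ge u_{1,min}$, both $v_{j,k+1} \ge v_{j,k} + \delta u_{1,min}$ and $\widetilde{v}_{3,k+1} \ge \max(0, \widetilde{v}_{3,k} + \delta u_{1,min})$, so taking the minimum yields $w_{j,k+1} \ge \max(0, w_{j,k} + \delta u_{1,min})$. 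That right-hand side is precisely the speed produced by one $\U_{dbl}$-step applied at $(x_{j,k}, w_{j,k})$ with parameters $(u_{1,min}, u_{1,max})$, and by Lemma \ref{lem_dbl_int_eta}c the $\eta$ of the synthetic post-step state $(x_{j,k} + \delta w_{j,k},\, \max(0, w_{j,k} + \delta u_{1,min}))$ equals $\eta^L_k$. Monotonicity of $\eta$ in position together with Lemma \ref{lem_dbl_int_eta}d for monotonicity in $v_k$ then deliver $\eta^L_{k+1} \ge \eta^L_k$. Combining (a)--(c) gives the required increment, with the physically natural auxiliary condition $\cos(\psi_{3,k}+\beta(\U_{steer}(s_{3,k})))\ge 0$ covering the backward-motion corner case $v_{3,k} < 0$ in (a).
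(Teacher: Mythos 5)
Your proof is correct, and it reaches the conclusion by a genuinely different route than the paper's. The paper fixes the lead car's action to the asserted worst case $\U_{dbl}(s_{1,k},\A_1)$ and then splits on the sign of $v_{3,k}$ and on the ordering of $v_{1,k}$ versus $v_{3,k}$ (with a further sub-split inside Case 2b on whether $v_{1,k+1}<v_{3,k+1}$), tracking in each branch which argument realizes the minimum $\min(v_{j,k},\widetilde{v}_{3,k})$. You instead prove the single inequality $w_{j,k+1}\ge\max(0,\,w_{j,k}+\delta u_{1,min})$ for the clamped minimum speed $w_{j,k}$, recognize the right-hand side as exactly one $\U_{dbl}$-step from $(x_{j,k},w_{j,k})$ with parameters $(u_{1,min},u_{1,max})$, and then invoke Lemma \ref{lem_dbl_int_eta}c together with monotonicity of $\eta$ in position (affine with slope one) and in velocity (Lemma \ref{lem_dbl_int_eta}d). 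This collapses the paper's four-way case analysis into one monotonicity argument, and it holds for every admissible lead-car acceleration rather than only the asserted worst case, which is the more defensible formulation since the lead cars are uncontrolled disturbances. You are also more careful than the paper on one point: the ego's longitudinal update carries the factor $\cos(\psi_{3,k}+\beta(\cdot))\le 1$, so the paper's claimed equality $\eta_{3,k+1}(v_{3,k+1})=\eta_{3,k}(v_{3,k})$ is really only an inequality (fortunately in the favorable direction, since that term enters with a minus sign); your synthetic-state comparison handles this explicitly. The residual requirement $\cos(\psi_{3,k}+\beta(\U_{steer}(s_{3,k})))\ge 0$ for the $v_{3,k}<0$ corner is equally implicit in the paper's Case 1, and you are right to flag it.
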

\begin{proof}
For notational simplicity, we set $j=1$ in this proof.
The case of $j=2$ follows similarly.
Denote
$\eta_{1,k}(v) \triangleq \eta((x_{1,k},v)^T,(u_{1,min},u_{1,max}))$
and
$\eta_{3,k}(v) \triangleq \eta((x_{3,k},v)^T,\A_3)$.
By the lead car assumption, $\psi_{1,k} = 0$ for all $k$.
Vehicle $3$ does not have control over the lead vehicles
so we assume the worst case where $u_{1,j,k} = \U_{dbl}(s_{1,k},\A_1)$ with $\A_1 = (u_{1,min},u_{1,max})^T$.
Denote
$v_{1,k+1} = v_{1,k} + \delta \U_{dbl}((x_{1,k},v_{1,k})^T,\A_1)$
and 
$v_{3,k+1} = v_{3,k} + \delta \U_{dbl}((x_{3,k},v_{3,k})^T,\A_3)$.
We show that when vehicle $3$ has control input given by \eqref{eq_car_u_k},
$h_{1,lead}$ satisfies \eqref{eq_bf_constraint}.

\emph{Case 1:}
Suppose $v_{3,k} \le 0$.
Then with control input given by \eqref{eq_car_u_k},
$v_{3,k+1} \le 0$ so $\V_{3,k} = \V_{3,k+1} = 0$.
By the lead car assumption, $v_{1,k} \ge 0$ so 
$x_{1,k+1} \ge x_{1,k}$ so $\eta_{1,k+1}(0) = x_{1,k+1} \ge x_{1,k} = \eta_{1,k}(0)$.
For similar reasons since $v_{3,k} \le 0$, $\eta_{3,k+1}(0) \le \eta_{3,k}(0)$.
Then
$h_{1,lead}(s_{k+1})
= \eta_{1,k+1}(0) - \eta_{3,k+1}(0) - D_{lead}
\ge \eta_{1,k}(0) - \eta_{3,k}(0) - D_{lead}
= h_{1,lead}(s_k)$.

\emph{Case 2:}
Suppose $v_{3,k} \ge 0$ so that given the control in \eqref{eq_car_u_k},
$v_{3,k} \ge v_{3,k+1} \ge 0$.
From Lemma \ref{lem_dbl_int_eta}c,
$\eta_{1,k+1}(v_{1,k+1}) = \eta_{1,k}(v_{1,k})$
and $\eta_{3,k+1}(v_{3,k+1}) = \eta_{3,k}(v_{3,k})$.

\emph{Case 2a:}
Suppose $v_{1,k} \le v_{3,k}$.
Then $v_{1,k+1} \le v_{3,k+1}$ because $\U_{dbl}((x_{3,k},v_{3,k})^T,\A_3) \ge \U_{dbl}((x_{1,k},v_{1,k})^T,\A_1)$. Then
\begin{IEEEeqnarray*}{l}
h_{1,lead}(s_{k+1}) \\
= \eta_{1,k+1}(v_{1,k+1}) - \eta_{3,k+1}(v_{3,k+1}) - D_{lead} - \tau_{head} v_{3,k+1} \\
\ge \eta_{1,k}(v_{1,k}) - \eta_{3,k}(v_{3,k}) - D_{lead} - \tau_{head} v_{3,k} \\
= h_{1,lead}(s_k).
\end{IEEEeqnarray*}

\emph{Case 2b:}
If $v_{1,k} \ge v_{3,k} \ge 0$
then $\eta_{1,k}(v_{1,k}) \ge \eta_{1,k}(v_{3,k})$ by Lemma \ref{lem_dbl_int_eta}d.
If $v_{1,k+1} < v_{3,k+1}$, then
using Lemma \ref{lem_dbl_int_eta}c,
\begin{IEEEeqnarray*}{l}
h_{1,lead}(s_{k+1}) \\
= \eta_{1,k+1}(v_{1,k+1}) - \eta_{3,k+1}(v_{3,k+1}) - D_{lead} -v_{3,k+1}\tau_{head} \\
\ge \eta_{1,k}(v_{1,k}) - \eta_{3,k}(v_{3,k}) - D_{lead} - v_{3,k}\tau_{head} \\
\ge \eta_{1,k}(v_{3,k}) - \eta_{3,k}(v_{3,k}) - D_{lead} - v_{3,k}\tau_{head} \\
= h_{1,lead}(s_k).
\end{IEEEeqnarray*}
Consider now when 
$v_{1,k+1} \ge v_{3,k+1}$.
For notational simplicity, denote
$\U_{dbl,1,k}(v) \triangleq \U_{dbl}((x_{1,k},v)^T,\widetilde{a}_1)$.
Then since $v_{1,k} > v_{3,k} \ge 0$,
\begin{equation}
\U_{dbl,1,k}(v_{1,k})
\le \U_{dbl,1,k}(v_{3,k}).
\label{eq_case2b_1}
\end{equation}
Then using \eqref{eq_case2b_1}
and that $\U_{dbl,1,k}(v_{3,k}) \le \U_{dbl}((x_{3,k},v_{3,k})^T,\A_3)$,
\begin{equation}
v_{3,k} + \delta \U_{dbl,1,k}(v_{1,k}) 
\le
v_{3,k} + \delta \U_{dbl,1,k}(v_{3,k})
\le
v_{3,k+1}.
\label{eq_case2b_2}
\end{equation}
From Lemma \ref{lem_dbl_int_eta}c,
$
\eta_{1,k}(v_{3,k})
=
\eta_{1,k+1}(v_{3,k} + \delta \U_{dbl,1,k}(v_{1,k}))
$.
From this, \eqref{eq_case2b_2}, and Lemma \ref{lem_dbl_int_eta}d we have
\begin{equation}
\eta_{1,k}(v_{3,k})
=
\eta_{1,k+1}(v_{3,k} + \delta \U_{dbl,1,k}(v_{1,k}))
\le 
\eta_{1,k+1}(v_{3,k+1}).
\label{eq_case2b_3}
\end{equation}
Then using \eqref{eq_case2b_3}, noting that $v_{3,k+1} \le v_{3,k}$,
and that Lemma \ref{lem_dbl_int_eta}c implies $\eta_{3,k}(v_{3,k}) = \eta_{3,k+1}(v_{3,k+1})$,
we have
\begin{IEEEeqnarray*}{l}
h_{1,lead}(s_k) \\
= \eta_{1,k}(v_{3,k}) - \eta_{3,k}(v_{3,k}) - D_{lead} - \tau_{head}v_{3,k} \\
\le \eta_{1,k+1}(v_{3,k+1}) - \eta_{3,k+1}(v_{3,k+1}) - D_{lead} - \tau_{head}v_{3,k+1} \\
= h_{1,lead}(s_{k+1}).
\end{IEEEeqnarray*}
\end{proof}

\noindent
The vehicle must also obey speed limits
so
let $v_{lim} \ge 0$ and
\begin{equation}
h_{spd}(s_k) = v_{lim} - \max(0, v_{3,k}) \label{eq_car_h_spd},
\end{equation}
which is a barrier function by a straightforward
calculation using the control input \eqref{eq_car_u_k}.

Given Corollary \ref{cor_min_h} and Theorem 3 of \cite{squires2021model}, we
now use barrier function composition to simultaneously satisfy the safety
constraints. For brevity, denote $\max(\cdot, \cdot)$ by $\cdot \wedge \cdot$
and $\min(\cdot,\cdot)$ as $\cdot \vee \cdot$. Additionally,
we drop the function argument so $h = h_{spd} \wedge h_{turn}$
means that given $s_k\in\mathbb{R}^{n_s}$, $h(s_k) = h_{spd}(s_k) \wedge h_{turn}(s_k)$.
Let
\begin{IEEEeqnarray*}{l}
h_{car}
= \\
\Big[h_{spd} \wedge h_{L^1} \wedge h_{H^2}\Big] \\
\wedge
\Big[
(h_{H^1} \wedge h_{1,lead})
\vee
(h_{L^2} \wedge h_{2,lead})
\vee
(h_{1,lead} \wedge h_{2,lead})
\Big].
\end{IEEEeqnarray*}
The first bracketed $[\cdot]$ term
ensures the car satisfies speed limits
and stays within the road limits.
The second bracketed term ensures one of the following holds:
(1)
the car stays in the first lane
and a safe distance from the lead car in the first lane,
(2)
the car stays in the second lane
and a safe distance from the lead car in the second lane,
or
(3)
the car stays a safe distance from both lead cars but can change lanes.

\section{Experiments}

\label{sec_experiments}

\subsection{Environments and Algorithms}

\label{sec_environments_and_algorithms}

In Sections \ref{sec_examples_fw}
and \ref{sec_examples_car}, we derived barrier functions for
fixed wing aircraft and self driving cars.
Although the direct computation of \eqref{eq_prgm}
is infeasible for non-convex systems, we consider
three efficient alternatives. First,
we use a Lagrangian computation, called BF LAG, which
has been employed previously in constrained
deep learning problems (e.g. \cite{ma2021feasible,yang2023model}).
In this approach, let $u_\phi:\mathbb{R}^{n_s}\times \mathbb{R}^{n_u}\to U$ be
a neural network parameterized by $\phi$.
Given a state and nominal control input pair,
the Lagrangian is
\begin{equation}
\mathcal{L}(s_k,\hat{u}_k) = \frac{1}{2}\norm{\hat{u}_k - u_\phi(s_k, \hat{u}_k)}^2 - \kappa c_h(s_k, u_\phi(s_k, \hat{u}_k))
\label{eq_lagrangian}
\end{equation}
where $\kappa\ge 0$ is a Lagrange multiplier.
We update $\phi$ by minimizing \eqref{eq_lagrangian} via gradient
descent.
Similarly, given a cost, we update $\kappa$ via 
$\kappa = \max(0, \kappa - rc_h(s_k,u_\phi(s_k)))$
where $r > 0$ is a learning rate (we set $r=1000$ in our experiments).
We additionally take the minimum of  $c_h$ and 0.0001 in \eqref{eq_lagrangian}
so that when minimizing the expectation of the Lagrangian
in \eqref{eq_lagrangian}, the average constraint
is less likely to be positive when
$u_\phi$ has some constraint violations in the dataset.
Because there is no guarantee that $u_\phi$ outputs
a safe action, we also include a line search
for the lowest cost, safe action $u_{l}$
between $\hat{u}_k$ and $\U_k$, the known safe action.
See \eqref{eq_fw_u_safe} and \eqref{eq_car_u_k}
for $\U_k$ given fixed-wing and car systems, respectively.
Further, even if $u_\phi$ outputs a safe action,
there is no guarantee that
the action is optimal
with respect
to \eqref{eq_prgm},
so we include a line search
between $\hat{u}_k$ and $u_\phi(s_k,\hat{u}_k)$
to find the lowest cost but safe action $u_{\phi,l}$ along the line
between $\hat{u}_k$ and $u_\phi(s_k,\hat{u}_k)$.
If both $\hat{u}_k$ and $u_\phi(s_k,\hat{u}_k)$
are unsafe, $u_{\phi,l}$ can be unsafe, so in this
case we apply $u_l$. Otherwise
we apply $u_{\phi,l}$ or $u_{l}$, whichever has the lowest
cost for \eqref{eq_prgm}.
The two other alternative overrides we evaluate are $u_{l}$
(called BF Line)
and $\U_k$ (called BF SINGLE).

For RL baselines, 
we compare a barrier override to algorithms discussed
for the GUARD Safe RL benchmark
\cite{zhao2024guard}. In particular, the baselines consist
of Trust Region Policy Optimization (TRPO) \cite{schulman2015trust},
Unrolling Safety Layer (USL) \cite{zhang2022saferl},
Constrained Policy Optimization (CPO) \cite{achiam2017constrained},
Feasible Actor Critic (FAC) \cite{ma2021feasible},
and LAG (TRPO with a Langrian cost term).
TRPO is the unconstrained baseline
to measure the peak performance that can be achieved without
considering safety. USL does an online action correction by
passing the action through a cost network and applying back propagation
to update the action until the cost is below a threshold.
CPO approximates a trust region constrained optimization
of surrogate functions with guaranteed improvement bounds.
LAG applies a policy loss
that includes a cost correction with a coefficient
that increases when the cost exceeds a threshold.
Finally, FAC extends the Lagrangian approach by making the
cost correction coefficient dependent on the state.

We evaluate the barrier function overrides against these baselines in two environments
for which we have derived barrier functions.
For every state that violates the safety condition,
the agent gets a cost of $1$.
Similarly, an episode is considered unsafe if any state
during the episode is unsafe.
To account for floating point error, we only count a state
as violating a constraint if it is more than $10^{-6}$ beyond a given
safety limit.
We include the number of unsafe episodes experienced during
training as it provides an interpretable cost budget
over the course of training, similar e.g. \cite{thananjeyan2021recovery}.

In our first environment,
we evaluate whether the barrier function in \eqref{eq_bf_fw}
can allow fixed wing aircraft
to stay within a safe flight envelope while using RL to learn to do waypoint following
(see Fig. \ref{fig_fw_screenshot}).
We use aircraft parameters from
\cite{boskovic2004adaptive,el2023design} and
choose reasonable values for $\mu_{max}, v_{min}, v_{max}, $ and $z_{min}$
(see Table \ref{tbl_fw_params}).
For the waypoint objective,
we start the aircraft at 
$x_0 = (17.5, 0, 0, 0, 0, 500)^T$ (velocity is meters per second,
angles are in radians, and positions are in meters)
and have 5 waypoints $w_{i}\in \mathbb{R}^3$
located at $w_{i+1} = w_i + (100, U_{i,y}, U_{i,z})^T$
where $w_0 = x_0$ and $U_{i,y},U_{i,z}$ are samples from a uniformly
distributed random variable over the range of $[-25, 25]$ meters
for $i=1,\ldots,5$. The observation consists of
scaled velocity 
(i.e. $(v_k - v_{min}) / (v_{min} - v_{max})$),
scaled $\gamma_k$
(i.e. $\gamma_k / \gamma_{max}$),
$\sin \psi_k$, $\cos \psi_k$,
the vector to the next waypoint relative to the current position
(i.e. $(w_j - (x_{k}, y_{k}, z_{k})^T) / O$ where $j$ is the index
of the next waypoint and $O = 50$ is a scaling factor),
and the scaled change to each subsequent waypoint
(i.e. $(w_{l+1} - w_{l}) / O$ for $l = j + 1,\ldots,6$).
Because the observation is a fixed size, we also append a valid
flag and fill the waypoint vectors with zeros when $l + 1 > 5$.
At every timestep, the agent receives a reward of 0.01 times the change
in distance to the next waypoint. A waypoint $w_i$ is reached when
$x_{k}$ is greater than the $x$-component of $w_i$. At this point
$e^{-d_{i,k} / 25}$ is added to the reward where $d_{i,k}$ is the
distance of the vehicle to $w_i$.
We set $\delta = 0.1$ and end an episode after 1000 timesteps,
the last waypoint is reached, it has taken more than 10 seconds
to reach the next waypoint, or the vehicle hits the ground.

Our second environment focuses on lane merging with adaptive
cruise control for self-driving cars.
In this scenario, the car under control
is in a two lane road with lead cars in both of the lanes.
The lead car initial offset is chosen from a uniform
distribution between 100 and 500 meters with initial speed
between 0 and 70 miles per hour. If an initial lead
car state results
in a system state $s_0$ where in $h_{car}(s_0) < 0$
then we sample again until a safe initial condition is found.
The lead car behavior continues at this speed for a uniformly
sampled time varying between 0 and 5 seconds at which point
it selects a new target speed between 0 and 70 miles per hour,
applying maximum acceleration or deceleration to achieve this speed.
The autonomous vehicle must learn to keep a safe distance behind
the lead car so that even in the rare case that the lead car decelerates
maximally to a stop, the autonomous vehicle can still avoid a collision
by similarly braking or by changing lanes, provided the vehicle in the other
lane is far enough ahead to allow this to be done safely.
When the autonomous vehicle passes a lead vehicle, we spawn a new
lead vehicle in the same lane using the same logic as the beginning of the episode.
The autonomous vehicle has a speed goal of $v_{tgt} = 70$ miles per hour which is the
same as the speed limit. It starts with an initial speed of 95\% of the
speed target.
The observation consists of the vector
\begin{equation*}
obs(s_k) =
\begin{bmatrix}
(d_{1,k} - 500) / 500 \\
(d_{2,k} - 500) / 500 \\
(y_{3,k} - W_{lane}) / W_{lane} \\
(v_{1,k} - v_{tgt}) / v_{tgt} \\
(v_{2,k} - v_{tgt}) / v_{tgt} \\
(v_{3,k} - v_{tgt}) / v_{tgt} \\
\sin \psi_{3,k} \\
\cos \psi_{3,k}
\end{bmatrix}.
\end{equation*}
where $d_{j,k} = x_{j,k} - x_{3,k}$ for $j=1,2$.
The reward is $r_k = 1 - |v_{3,k} - v_{tgt}| / v_{tgt}$
and the episode ends when the distance to the lead vehicle is less than
$D_{lead}$, the vehicle has exited the road, or 1000 timesteps have elapsed.
We use car parameters from
\cite{polack2017kinematic, ames2016control}
and choose reasonable values for $D_{lead}$, $v_{lim}$, $W_{lane}$,
and $W_{car}$ (see Table \ref{tbl_car_params}).

\subsection{Results Relative to Safe RL Baselines}

Results for the UAV environment
are shown in Figure \ref{fig_fw_results}
which plots
performance of the barrier function
override using $u_\phi$ (called BF Lag)
compared to the RL baselines.
As expected, TRPO has the highest performance
but as discussed in the next paragraph, also has the highest
cost.
USL has similar initial performance to TRPO because it has
a burn-in period for the first two million steps where the safety override is inactive.
However, once the safety override activates, performance
deteriorates without improving safety.
In two of the cases (LAG, CPO), the baseline cannot solve
the task while the final baseline, FAC, learns to reduce
unsafe behavior while producing significantly lower
reward than TRPO. 

Similarly, for cost, TRPO maintains a high cost throughout training. This is
partly because the waypoints can be placed outside of what is feasible for the
flight envelope but also that exceeding the speed limit increases the maximum
discounted reward. Comparing reward and cost, there is a clear
tradeoff amongst the baselines as algorithms that achieve lower cost
also achieve less reward. The only baselines to approximate 
zero cost (LAG and CPO) do so only near the end of training and do not solve
the task.
An interesting comparison is cost to the number of unsafe episodes.
Notably, all baselines encounter many unafe episodes,
meaning that there is at least one state that is unsafe.
This occurs because although the safety aware baselines make an attempt
to have lower cost, they still have safety violations so their employment
for safety critical applications is limited.

On the other hand, BF LAG achieves both high reward and zero cost during
training where the reward is comparable to the unconstrained baseline
and the cost is significantly improved over any of the baselines.
In particular, BF LAG allows the RL system to achieve
a higher reward than any of the constrained baselines while having
a comparable reward to the unconstrained baseline TRPO.
Notably, every baseline has at least 10,000 unsafe episodes whereas BF LAG
has zero. It is likely that BF LAG achieves less reward per episode than
the unconstrained baseline because we purposefully place waypoints
outside of what is achievable within the safe flight envelope. For instance,
given a max pitch of 10 degrees and 100 meters between waypoints,
the maximum altitude change is given by $100 \sin(\psi_{max})\approx 17$ meters,
which is less than the limits of how much altitude difference there
can be between waypoints of $25$ meters.

\begin{figure}
\centering

\begin{subfigure}{\columnwidth}
    \includegraphics[trim={0 65 0 0},clip]{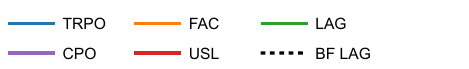}
\end{subfigure}
\begin{subfigure}{\columnwidth}
    \includegraphics{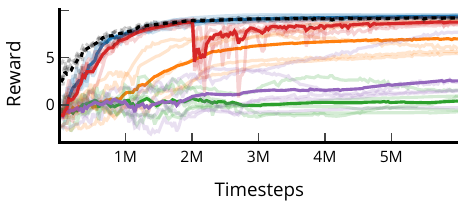}
\end{subfigure}
\begin{subfigure}{\columnwidth}
    \includegraphics{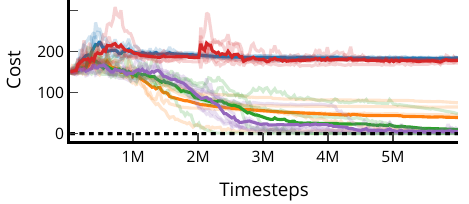}
\end{subfigure}
\begin{subfigure}{\columnwidth}
    \includegraphics{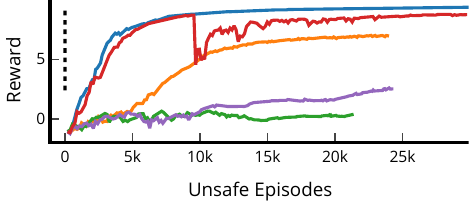}
\end{subfigure}
\caption{
Results in the waypoint following environment for a fixed wing UAV.
(top) Reward as a function of episode sample,
(middle) Cost as a function of episode sample,
(bottom) Reward vs cumulative number of unsafe episodes.
In the top and middle plots, faded lines
show the outcomes of the five repeated experiments
for a given algorithm
while non-faded lines
represent the mean of these experiments.
}
\label{fig_fw_results}
\end{figure}

\begin{table}
\centering
\caption{Fixed Wing UAV parameters}
\label{tbl_fw_params}
\begin{tabular}{l|r|l|l}
Parameter & Value & Units & Description \\\hline
$\rho$ & 1.2251  & $kg / m^3$ & air density \\
$W$ & 68.68  & $N$ & weight \\
$S$ & 1.058  & $m^2$ & wing surface area \\
$T_{max}$ & 20.60 & $N$ & maximum thrust \\
$[n_{min}, n_{max}]$ & $[-1.0, 2.5]$  & unitless & min/max lift factor \\
$C_{d0}$ & 0.02544  & unitless & parasitic drag coefficient \\
$k$ & 0.059 & unitless & induced drag coefficient \\
$\mu_{max}$ & 30 & deg & max bank angle \\
$[v_{min}, v_{max}]$ & $[15, 20]$ & $m / s$ & min/max speed \\
$z_{min}$ & 400 & $m$ & altitude floor \\
$\lambda$ & 0.5 & unitless & see \eqref{eq_bf_constraint}
\end{tabular}
\end{table}

In the car environment,
an episode can be quickly ended by running off the road.
This leads to non-intuitive results as
the unconstrained baseline has high cost but also the least
number of unsafe episodes of all the baselines.
Because unsafe behavior means that the episode ends and
the future rewards goes to zero, the unconstrained baseline
learns relatively safe behavior in terms of catastrophic
outputs (e.g. the lead car distance is less than $D_{lead}$
or running off the road). At the same time, it ignores
unsafe behavior that does not end the episode (going
faster than the speed limit, getting closer than
the time to collision limit). In other words, it learns
to avoid behaviors that lead to catastrophic outcomes
while allowing ones that have no effect on the reward.
Other safe baselines appear to have the opposite
approach. Because they are trained to avoid high cost,
they learn to create catastrophic outcomes because those
scenarios lead to a cost of 1 rather than 
keeping the episode going and inducing high costs
from keeping the episode going.
Results are in Figure \ref{fig_car_results}.
On the other hand, BF LAG does not suffer from this tuning
challenge as it achieves high reward and zero cost.
Its initial high performance is due to it being able to
keep episodes going the full 1000 timesteps.
After this it learns to effectively pass cars to achieve
speeds close to the target speed without having a collision.

\begin{figure}
\centering

\begin{subfigure}{\columnwidth}
    \includegraphics[trim={0 65 0 0},clip]{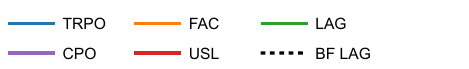}
\end{subfigure}
\begin{subfigure}{\columnwidth}
    \includegraphics{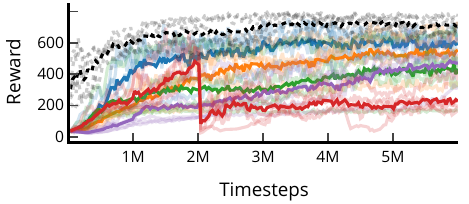}
\end{subfigure}
\begin{subfigure}{\columnwidth}
    \includegraphics{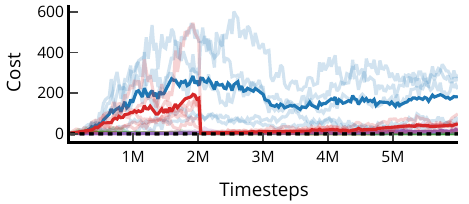}
\end{subfigure}
\begin{subfigure}{\columnwidth}
    \includegraphics{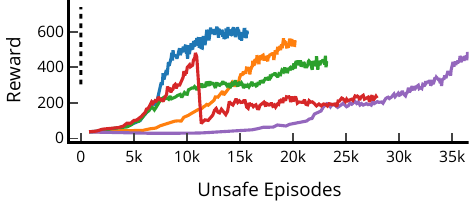}
\end{subfigure}
\caption{
Results in car environment.
(top) Reward as a function of episode sample,
(middle) Cost as a function of episode sample,
(bottom) Reward vs cumulative number of unsafe episodes.
In the top and middle plots, faded lines
show the outcomes of the five repeated experiments
for a given algorithm
while non-faded lines
represent the mean of these experiments.
}
\label{fig_car_results}
\end{figure}

\begin{table}
\centering
\caption{Car parameters}
\label{tbl_car_params}
\begin{tabular}{l|r|l|l}
Parameter & Value & Units & Description \\\hline
$l_f$ & 1.17 & $m$ & front distance \\
$l_r$ & 1.77 & $m$ & rear distance \\
$[u_{1,min},u_{1,max}]$ & $[-2.87, 2.87]$ & $m$ & min/max lead \\
&&& acceleration \\
$\A_{3}$ & $[2.86, 2.86]$ & $m$ & min/max \\
&&& acceleration \\
$[u_{2,min},u_{2,max}]$ & $[-1, 1]$ & deg & min/max steering \\
&&& angle \\
$\tau$ & $1.8$ & sec & time to \\
&&& collision \\
$W_{car}$ & 1.83 & $m$ & car width \\
$W_{lane}$ & 3.6 & $m$ & lane width \\
$\lambda$ & 0.5 & unitless & see \eqref{eq_bf_constraint}
\end{tabular}
\end{table}

\subsection{Results of Barrier Override Alternatives}

While BF LAG (the controller $u_\phi$ described
in Section \ref{sec_environments_and_algorithms})
is theoretically motivated by the 
Karush-Kuhn-Tucker conditions \cite{boyd2004convex},
it nevertheless has some disadvantages. First,
choosing the initial value for the Lagrange multiplier $\kappa$ and the associated
learning
rate can be difficult as it is affected by differences
in scaling for the objective and constraint.
This can be solved by hyperparameter tuning but
this is unlikely to generalize across environments
or even while training in a single environment
\cite{haarnoja2018soft}.
For instance, in \cite{zhao2024guard}, the initial
value and learning rate for the Lagrange multiplier is 0
and 0.005, respectively, for FAC whereas we find an initial
value and learning rate of 10 and 1000 work better for
$u_\phi$.
If the multiplier is too small,
then $u_\phi$ may train too slowly to learn safe actions.
On the other hand, if it is too large, $u_\phi$ may be
conservative and not approximate $\hat{u}_k$ particularly well.
A second challenge of the Lagrange approach is
that it violates the Markov assumption when training
the RL policy which is a theoretical challenge
for RL (discussed e.g. in \cite{foerster2017stabilising}). To see
how the Markov assumption is violated, note that when there is an override,
the system has dynamics $f(s_k, u_\phi(s_k))$ and since
$u_\phi(s_k)$ is changing due to updates from minimizing \eqref{eq_lagrangian},
the dynamics are changing as well.
Third, training $u_\phi$ will slow down learning as the additional
backpropagation adds computational overhead.
Fourth,
there is no guarantee that $u_\phi$ will
output a safe action even if $s_k$ is such that $h(s_k) \ge 0$
(see Figure \ref{fig_nonconvex_ovr_unsafe_pct}).
Similarly, if $u_\phi$  outputs a safe action,
there is no guarantee that the output is the closest
possible $u_k\in U$ to $\hat{u}_k$ such that $c_h(s_k, u_\phi(s_k, \hat{u}_k)) \ge 0$.
This necessitates the line search discussed above for BF LAG.
Similar to the third issue,
the line search implies a higher computational overhead
as it involves additional constraint computations which
can be computationally intensive when either the dynamics
in \eqref{eq_dyn} or the barrier function have a significant
computational burden.
Finally, since $u_\phi$ is a neural network, interpreting
its output is difficult. For a safety critical
application, particularly where humans are involved,
having a predictable override can increase interpretability
and explainability as these factors are critical for trust
\cite{mehrotra2024systematic}. Consider for instance a self-driving car
example. A known override
that
the car will decelerate at a known rate and steer so that
the car is parallel to the road is more interpretable
than an override that is a neural network.

Figures \ref{fig_fw_results} and \ref{fig_car_results}
indicate that barrier functions can enable safe control 
overrides even when time is discrete or the dynamics are
non-convex in the control input. We now compare
the theoretically motivated override $u_\phi$
with more computationally efficient and more easily
tuned overrides $u_l$ (BF Line) and $\widetilde{u}$ (BF Single) 
in Figure
\ref{fig_ovr_fw_results}.
Surprisingly, the performance of BF LAG is nearly
the same as BF Line and BF Single for the fixed wing environment
(top plot of
Figure \ref{fig_ovr_fw_results})
and in either environment, BF Single has similar performance
to the top performing baselines.
This is the case even though the distance between the override and
the nominal control value is significantly smaller
for BF Lag than BF Single (top middle plot of Figure \ref{fig_ovr_fw_results}).
Given the relatively large distance
of the override to the nominal,
one would expect that, for all else equal,
the deployed action would have a high standard deviation,
since it may be switching between the nominal and conservative
override of $\widetilde{u}_k$. From the bottom middle plot
of Figure \ref{fig_ovr_fw_results} though, this is not the case,
as BF Single actually has the lowest action standard deviation.
Further, in the bottom plot of Figure \ref{fig_ovr_fw_results},
BF Single also has the least percentage of overrides of the alternatives.
We posit from this that what is actually happening is that
the RL system has learned that the BF Single override is significant
and adapts the policy so that it avoids having an override take place.
In other words, for the fixed wing environment, having an override
that better approximates \eqref{eq_prgm} does not yield much benefit
because the RL system compensates for suboptimal solutions
to \eqref{eq_prgm}.

On the other hand,
Figure \ref{fig_ovr_car_results}
shows that BF Lag does have benefits over BF Single in the car environment.
However, even in this case, BF Single
still matches the performance of the highest
performing baseline (TRPO). Noting
the significant number of violations for any of the RL
baselines, it appears that being able to
specify a barrier function for a system
is sufficient for high performance safety assurance
for both the fixed wing and car environments.
An optimal computation of \eqref{eq_prgm} is not
necessary to beat cost aware RL even for 
discrete time, non-convex systems for the evaluated
environments.

\begin{figure}
\centering

\begin{subfigure}{\columnwidth}
    \includegraphics[trim={0 65 0 0},clip]{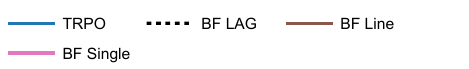}
\end{subfigure}
\begin{subfigure}{\columnwidth}
    \includegraphics{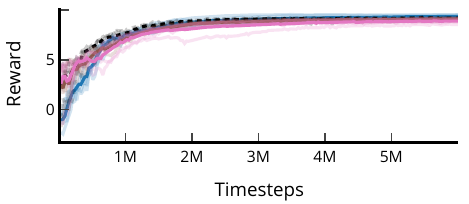}
\end{subfigure}
\begin{subfigure}{\columnwidth}
    \includegraphics{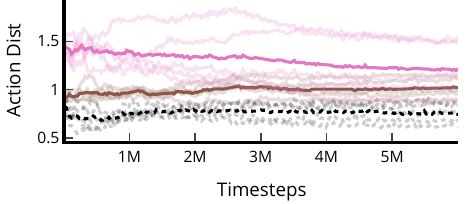}
\end{subfigure}
\begin{subfigure}{\columnwidth}
    \includegraphics{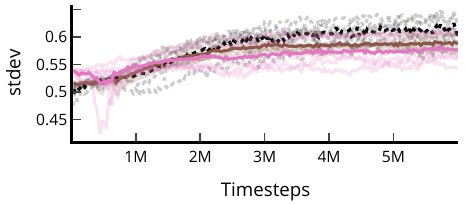}
\end{subfigure}
\begin{subfigure}{\columnwidth}
    \includegraphics{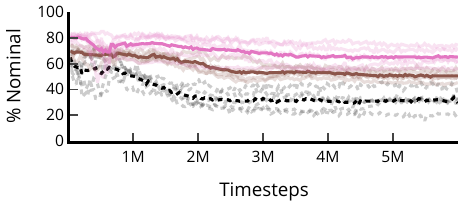}
\end{subfigure}
\caption{
A comparison of approximate barrier function overrides
in the fixed wing UAV environment.
(top) Reward as a function of episode sample,
(top middle) Average distance of the override to the nominal control value,
(bottom middle) Standard deviation of the control value.
(bottom) Percentage of actions where the nominal override is used
rather than an override.
Faded lines
show the outcomes of the five repeated experiments
for a given algorithm
while non-faded lines
represent the mean of these experiments.
}
\label{fig_ovr_fw_results}
\end{figure}

\begin{figure}
\centering

\begin{subfigure}{\columnwidth}
    \includegraphics[trim={0 65 0 0},clip]{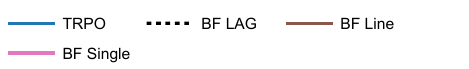}
\end{subfigure}
\begin{subfigure}{\columnwidth}
    \includegraphics{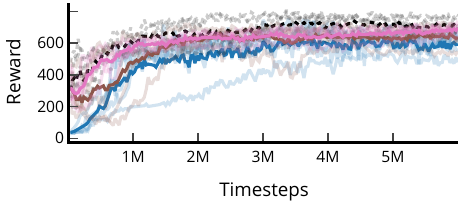}
\end{subfigure}
\begin{subfigure}{\columnwidth}
    \includegraphics{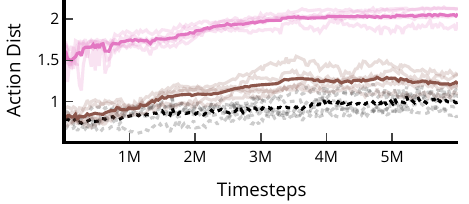}
\end{subfigure}
\begin{subfigure}{\columnwidth}
    \includegraphics{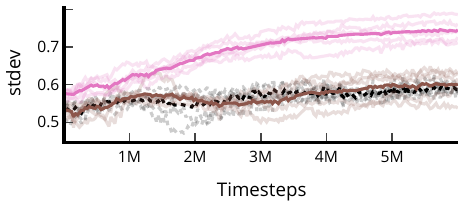}
\end{subfigure}
\begin{subfigure}{\columnwidth}
    \includegraphics{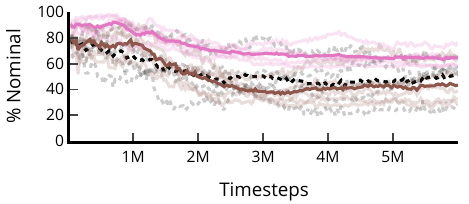}
\end{subfigure}
\caption{
A comparison of approximate barrier function overrides
in the car environment.
(top) Reward as a function of episode sample,
(top middle) Average distance of the override to the nominal control value,
(bottom middle) Standard deviation of the control value.
(bottom) Percentage of actions where the nominal override is used
rather than an override.
Faded lines
show the outcomes of the five repeated experiments
for a given algorithm
while non-faded lines
represent the mean of these experiments.
}
\label{fig_ovr_car_results}
\end{figure}

This observation has implications for RL deployment as there
are significant downsides to employing BF Lag discussed above.
Here we focus on the computational burden that
BF Lag induces.
RL often has a set of hyperparameters
that must be tuned so avoiding also tuning
the Lagrangian initial value and learning rate is a significant
benefit of BF Single even if it results in somewhat less performance
in the car environment. In addition, there is a significant
computational overhead of BF Lag, primarily due to additional
computations of the constraint (see Figure \ref{fig_num_steps}).
In Table \ref{tbl_computation} we compare the compuation times
of BF Single and BF Lag, finding that
BF Single training times are 1.7 and 3.0 times faster than BF Lag
for the fixed-wing and car environments, respectively.
In both environments, the key difference is that BF Single does
less constraint calculations and this is magnified
in the car environment because the barrier function
is more computationally intensive. The reason for this is the multi-step horizon
in computing $h_{L^1}, h_{L^2}, h_{H^1}$, and $h_{H^2}$ which causes
$c_h$ to become a bottleneck both in the forward pass as well as the backward pass
of $\mathcal{L}$.
In particular, because $u_\phi$ has no assurance that it will output a
safe action (see Figure \ref{fig_nonconvex_ovr_unsafe_pct}),
we must include a line search
to ensure that a safe override always occurs.
Whether BF Lag has any beneficial effect on performance appears
to be environment specific. However, given the downsides of 
BF Lag and the strong performance of BF Single relative to RL
baselines, this makes BF Single a reasonable approach
for computing an override for non-convex problems
when an override is included in RL training so that the RL
system can incorporate the effect of the override in developing
a policy.

\begin{figure}
\centering

\includegraphics{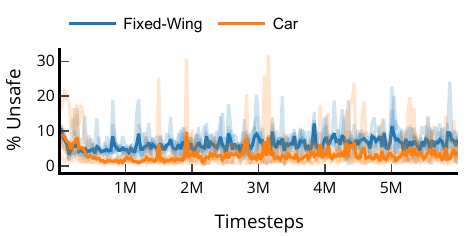}
\caption{%
The percentage of cases where $u_\phi$ (BF Lag)
outputs an unsafe action.
}
\label{fig_nonconvex_ovr_unsafe_pct}
\end{figure}

\begin{table}
\centering
\caption{
        Cumulative computation times in seconds per training epoch broken into
        data generation and training phases.
        Training is done on a Intel(R) Core(TM) i7-8700 CPU (3.20GHz)
        cpu. Gpu acceleration did not speed up training because
        the neural networks are relatively small.
    }
\label{tbl_computation}
\begin{tabular}{l  l   c  c | c  c}
\multicolumn{2}{c}{} &  \multicolumn{2}{c}{FW} & \multicolumn{2}{c}{Car} \\ \cline{3-6}\cline{3-6}
\multicolumn{2}{c}{} & \multicolumn{1}{c}{BF Lag} & \multicolumn{1}{c|}{BF Single} & \multicolumn{1}{c}{BF Lag} & \multicolumn{1}{c}{BF Single} \\\cline{3-6}
\multirow{3}{*}{\rotatebox[origin=c]{90}{\underline{Gen Data} \quad}} & $\hat{u}_k$  & 9.3 & 9.1 & 9.3 & 9.1\\
 & $c_h$  & 10.2 & 1.1 & 33.3 & 4.4\\
 & Env Step  & 4.6 & 4.5 & 3.9 & 3.9\\
 & $h$  & 0.5 & 0.5 & 3.4 & 3.3\\
 & $u_\phi$  & 3.2 &  & 2.3 & \\
 & $\widetilde{u}_k$  & 0.5 & 0.1 & 0.7 & 0.3\\
\hline
\multirow{3}{*}{\rotatebox[origin=c]{90}{\underline{Train} \qquad}} & $\mathcal{L}$ back  & 0.9 &  & 11.9 & \\
 & $c_h$ fwd  & 0.2 &  & 7.1 & \\
 & Nominal  & 4.7 & 4.6 & 4.6 & 4.5\\
 & $u_\phi$ fwd  & 0.4 &  & 0.4 & \\
 & $h$ fwd  & 0.0 &  & 0.1 & \\
 & $\mathcal{L}$ fwd  & 0.0 &  & 0.1 & \\\hline \hline
 &  Gen Data Total & 28.2 & 15.3 & 52.9 & 21.1\\
 &  Train Total & 6.3 & 4.6 & 24.2 & 4.5\\
 &  Total & 34.5 & 19.9 & 77.1 & 25.5
\end{tabular}
\end{table}
\section{Conclusion}

In this paper we have discussed
issues with applying barrier functions
to RL.
In particular, although RL can be applied
to dynamics that are non-convex in the control
input, this can make computing an override with
a barrier function computationally intractable.
After deriving barrier functions for two systems
that are non-convex in the control input, namely
fixed wing aircraft and self-driving cars,
we demonstrated that approximations of an optimal
override are a computationally efficient way to
ensure safety while preserving the performance
capabilities of an RL system. In our comparison
to safe RL baselines for fixed wing aircraft and
self-driving cars, a barrier function with an approximate
safe override has zero safety violations while
having comparable or better performance relative
to model free RL approaches.
This suggests that barrier functions can be applied
to non-convex systems while retaining the performance
benefits of an RL system.

\label{sec_conclusion}

\section*{Acknowledgements}

The authors would like to thank the Georgia Tech Research Institute for funding
this research out of Independent Research and Development (IRAD) funds.
Additionally, the authors would like to thank Dr. Klimka Kasraie for her valuable
feedback on this paper.

\addtolength{\textheight}{-1.5cm} 
\vspace{-0.5em}
\bibliographystyle{IEEEtran}
\bibliography{inputs/nonconvex}

\end{document}